\title{Information-Theoretic Generative Clustering of Documents}
\author{
Xin Du and Kumiko Tanaka-Ishii
}
\newtheorem{proposition}{Proposition}
\newcommand{\dvar}{X}
\newcommand{\qvar}{Y}
\newcommand{\doc}{x}
\newcommand{\query}{y}
\newcommand{\docs}{\mathrm{X}}
\newcommand{\queries}{\mathrm{Y}}
\newcommand{\qspace}{\mathcal{Y}}
\newcommand{\multilinecol}[2]{%
\begin{tabularx}{\dimexpr\linewidth-\ALG@thistlm-0.05cm}[t]{@{}X@{}cp{1.0cm}@{}}
#1 && #2
\end{tabularx}
}
\begin{document}

\maketitle

\begin{abstract}
We present {\em generative clustering} (GC) for clustering a set of documents,
$\docs$, by using texts $\queries$ generated by large language models (LLMs)
instead of by clustering the original documents $\docs$. Because LLMs provide
probability distributions, the similarity between two documents can be
rigorously defined in an information-theoretic manner by the KL divergence. We
also propose a natural, novel clustering algorithm by using importance sampling.
We show that GC achieves the state-of-the-art performance, outperforming any
previous clustering method often by a large margin. Furthermore, we show an
application to generative document retrieval in which documents are indexed via
hierarchical clustering and our method improves the retrieval accuracy.
\end{abstract}

%
\begin{links}
     \link{Code}{https://github.com/kduxin/lmgc}
\end{links}

\section{Introduction}

Document clustering has long been a foundational problem in data
science. Traditionally, every document $\doc\in\docs$ is first
translated into some computational representation
$\query\in\mathbb{R}^n$: in the 90s, as a bag-of-words vector, and
more recently, by using BERT \citep{devlin2019bert}. A clustering
algorithm, typically $k$-means, is then applied.

While large language models (LLMs) have been applied to diverse tasks,
their use in basic document clustering remains underexplored.
Traditionally, the translation of a document $\doc$ into a
representation $\query$ was considered a straightforward vectorization
step. However, texts are often sparse, with much that is unsaid and
based on implicit knowledge. By using LLMs, like GPT-4
\citep{openai2023gpt}, to generate $\query$ by complementing this
missing knowledge, could the content of $\doc$ be better explicated
and yield a better clustering result?

In this paper we explore the effect of using an LLM to rephrase each
document $\doc\in\docs$.  Previously, the term ``generative
clustering'' was used to refer to approaches where document vectors
appear to be ``generated'' from clusters
\citep{zhong2005generative,yang2022learning}. In this paper, however,
``generative'' refers to the use of a generative LLM.

The use of an LLM to translate $\doc$ into $\query$ has the further
benefit of opening the possibility to mathematically formulate the
information underlying $\doc$ more accurately. Previously, when
clustering a set of documents $\doc$ directly, not much could be done
statistically, given the bound by a finite number of words and the
lack of underlying knowledge. When using a simple representation such
as bag of words, document vectors are generated by a Gaussian
distribution for each cluster, thus reducing the clustering into a
simple $k$-means problem.  On the other hand, if we incorporate a set
of $\query$ produced by an LLM, the clustering problem can be
rigorously formulated using information-theory, according to the
probabilistic pairing information of $\doc\leftrightarrow\query$.

Specifically, $\query$ can be defined over the infinite set $\qspace$
of all possible word sequences of unlimited length. Each document is
represented by a probability distribution over generated texts in
$\qspace$, denoted as $p(\qvar=\query|\dvar=\doc)$. The dissimilarity
between documents is quantified using the KL divergence between their
respective distributions over the generated texts. Clustering then is
performed by grouping documents based on the similarity of these
distributions.

In this paper, we show that large improvements often result for plain
document clustering in this new generative clustering setting. This
suggests that the quantity of knowledge underlying a text is indeed
large, and that such knowledge can be inferred using generative LLMs.

This gain is already important even under the typical Gaussian
assumption that is often adopted when clustering $\dvar$ directly.
Clustering of $\qvar$, however, can go much further. For generative
clustering, the Gaussian assumption can be replaced by a more natural
distribution through importance sampling.  We empirically show
that our proposed method achieved consistent improvement on four
document-clustering datasets, often with large margins.

The proposed method's possible applications are vast, entailing all
problems that involve clustering. Here, we investigate the generative
document retrieval (GDR) \citep{tay2022dsi,wang2022nci}. In GDR,
documents are indexed using a prefix code acquired via hierarchical
clustering. We show that, on the MS Marco Lite dataset, GDR with the
proposed generative clustering (GC) method achieved up to 36\%
improvement in the retrieval accuracy.

\section{Related Works}
\subsection{Document Clustering}

Document clustering is a fundamental unsupervised learning task with
broad applications. Traditional methods depend on document
representations like bag of words (BoW), tf-idf, and topic models such
as latent Dirichlet allocation (LDA) \citep{blei2003latent}.

Recent advancements in text representation learning have significantly
improved clustering performance. Techniques like continuous
bag of words (CBoW) \citep{mikolov2013distributed} generate
low-dimensional word vectors, which can be averaged to form document
vectors. Deep neural networks have further enhanced these embeddings
by incorporating contextual information, as seen in models like ELMo
\citep{elmo}, BERT \citep{devlin2019bert}, and SentenceBERT
\citep{sbert2019}.

These methods focus on encoding documents as dense vectors that
capture similarities across texts. Studies have demonstrated the
effectiveness of pretrained models for clustering
\citep{subakti2022performance, guan2022deep}, along with further
improvements by using deep autoencoders and contrastive learning
\citep{xie2016unsupervised, guo2017improved}.

However, these approaches may not fully capture deeper knowledge
within texts, as BERT-like models are limited by their fixed vector
outputs and cannot complete complex tasks. In contrast, generative
models like GPT-4 \citep{openai2023gpt} can handle complex reasoning
through autoregressive text generation \citep{kojima2022large}, though
their use in clustering is still being explored.

Recent works have attempted to use generative LLMs for clustering, but
these approaches lack a rigorous foundation.
\citet{viswanathan2023large} explored the use of GPT models at several
clustering stages, including document expansion before vector
encoding, generation of clustering constraints, and correction of
clustering results. \citet{zhang2023clusterllm} used LLMs to predict a
document's relative closeness to two others and performed clustering
based on these triplet features. In contrast, our method formulates
the clustering problem in a rigorous information-theoretic framework
and leverages the LLM's ability to precisely evaluate generation
probabilities that were not used in these previous works.

\subsection{Generative Language Models}
\label{sec:pretrained-lm}

Generative language models are those trained to predict and generate
text autoregressively; they include both decoding-only models like GPT
and encoder-decoder models like T5 \citep{raffel2020exploring}. While
their architectures differ, both types of models function similarly,
and a GPT model can be used in a manner similar to the use of T5.

T5 models are typically trained in a multi-task setting, with
task-specific prefixes attached to input texts during training. In
Sections \ref{sec:eval-clustering} and \ref{sec:gdr}, we explore a
specific pre-trained T5 model called {\em doc2query}
\citep{nogueira2019doc2query}, which has been trained on various
tasks, including query generation, title generation, and question
answering.

\subsection{Information-Theoretic Clustering}

Information-theoretic clustering uses measures like the KL divergences
or mutual information as optimization objectives
\citep{dhillon2003information, slonim2000document}. As described in
Section \ref{sec:problem}, we aim to minimize the total KL divergence
between documents and their cluster centroids, where the centroids are
represented as distributions over the infinite set $\qspace$.

This approach is effective for finite discrete distributions, such as
word vocabularies, where the KL divergence can be computed exactly. In
continuous spaces, however, KL divergence estimation requires density
estimation, which involves assumptions about the data distribution and
limits applicability. Many studies have tackled these challenges
\citep{perez2008kullback, wang2009divergence, nguyen2010estimating}.

In our approach, $\qspace$ is an infinite discrete space where the
probability mass for each text can be evaluated exactly, though it
remains challenging to manage this infinite space. We address this
issue by using importance sampling \citep{kloek1978bayesian}, as
detailed in Section \ref{sec:algorithm}.

\section{Generative Clustering of Documents}
\label{sec:problem}

A clustering, in this work, is defined as a partition of documents
$\docs$ into $K$ clusters. Each document $\doc\in\docs$ is associated
with a conditional probability distribution $p(\qvar=\query|\doc)$,
which is defined for $\query$ over the infinite set $\qspace$ of all
possible word sequences. $p(\qvar=\query|\doc)$ measures the
probability mass that a text $\query$ is generated from a document
$\doc$, and this value is calculated by using a generative language
model with the following decomposition into word probabilities:
\begin{equation}
    p(\query|\doc) =
    p(\omega_1|\doc) \,p(\omega_2|\doc,\omega_1)\ldots p(\omega_{l}|\doc,\omega_1,\cdots,\omega_{l-1}),
    \label{eq:decomposition}
\end{equation}
where $\omega_1,\omega_2,\ldots,\omega_l$ are the words in the text
$\query$ and $l$ is the text's length.

Each cluster $k$ ($k=1,2,\cdots,K$) is also associated with a
distribution over $\qspace$, denoted as $p(\qvar|k)$. We refer to
$p(\qvar|k)$ as a cluster's ``centroid'' in this paper.

The clustering objective is to jointly learn a cluster assignment
function $f:\docs\to\{1,2,\cdots,K\}$ and the cluster centroids
$p(\qvar|k)$ while minimizing the total within-cluster distortion, as
follows:
\begin{align}
    \min_{f, \{p(\qvar|k)\}_k } D &= \sum_{\doc\in\docs} d(\doc, f(\doc)),
        \label{eq:objective} \\
    \text{where}~d(\doc, k) &=
    \mathrm{KL} \bigl[p(\qvar|\doc) \lVert p(\qvar|k)\bigr],
        \label{eq:distortion}
\end{align}
If $\qspace$ were finite, this objective is no more than a special
case of the Bregman hard clustering problem
\citep{banerjee2005clustering}. Note that $k$-means is a special case,
too. However, as mentioned at the beginning of this section, $\qspace$
{\em is infinite}; therefore, we need to figure out how to calculate
$p(\qvar|k)$. Unlike the document-specific distribution
$p(\qvar|\doc)$, which is acquired easily via an LLM, $p(\qvar|k)$ is
unknown. Moreover, we obviously need to determine how to calculate the
KL divergence for $\forall \doc,k$.

The condition probability $p(\qvar|\doc)$ can be seen as a novel kind of
document embedding for $\doc$. An embedding is a distribution over $\qspace$,
and is thus infinitely dimensional. By selecting a proper set of texts
$\queries$, $p(\qvar|\doc)$ can be ``materialized'' as a vector, with each entry
being $p(\qvar=\query|\doc)$ for a text $\query\in\queries$. However, the choice
of $\queries$ is crucial, and how to use the vector for clustering is not
straightforward. We answer these questions by proposing a novel clustering
algorithm in Section \ref{sec:algorithm}.

\section{A Conventional Baseline Using BERT}

Conventional clustering methods can be interpreted as a baseline
solution to the above problem, via approximating the KL divergence by
using a BERT model \citep{devlin2019bert}, for example, to embed the
generated texts $\qvar$ into a low-dimensional vector space. In other
words, each of $p(\qvar|\doc)$ and $p(\qvar|k)$ is a distribution over
the vector space.

Then, the clustering problem specified by Eqs.
(\ref{eq:objective}-\ref{eq:distortion}) is solved by using a standard
$k$-means algorithm and assuming the distributions $p(\qvar|\doc)$ and
$p(\qvar|k)$ are both Gaussian. Here, Eq. \eqref{eq:objective} reduces
to:
\begin{equation}
    \min_{f, \{\mathbb{E}[p(\qvar|k)]\}_k } D =
    \sum_{\doc\in\docs}
    \lVert \mathbb{E}[p(\qvar|\doc)] - \mathbb{E}[p(\qvar|f(\doc))\rVert^2,
    \label{eq:objective-euclid}
\end{equation}
where $f(x)$ is the cluster that $x$ is assigned to,
$\mathbb{E}$ represents the mean vector of a Gaussian
distribution. $\mathbb{E}[p(\qvar|\doc)]$ is estimated by the center
vector of texts sampled from $\doc$.

There are multiple methods to obtain $\mathbb{E}[p(\qvar|\doc)]$ and
we consider two here. The first is to let
$\mathbb{E}[p(\qvar|\doc)]=\text{BERT}(\doc)$, i.e., the document's
embedding. This method is equivalent to the most common clustering
method and does not use the generative language model. In contrast,
the second method uses the model by generating multiple texts for each
document $\doc$ and letting $\mathbb{E}[p(\qvar|\doc)]$ be the mean
vector of the generated texts' embeddings. The mean vector is used
because it is a maximum-likelihood estimator of
$\mathbb{E}[p(\qvar|\doc)]$.

Unfortunately, such conventional solutions involve multiple concerns.
It could be overly simplistic to assume that the generated texts are
representable within the same vector space and that $p(\qvar|\doc)$
and $p(\qvar|k)$ follow Gaussian distributions. In fact,
$p(\qvar|\doc)$ often exhibits multimodality when we use a BERT model
to embed the generated texts. Furthermore, this solution uses a text
embedding model in addition to the generative language model, and the
possible inconsistency between these two models might degrade the
clustering quality.

\section{Generative Clustering\\Using Importance Sampling}
\label{sec:algorithm}

\subsection{Two-Step Iteration Algorithm}
\label{sec:twostep}

Given the above concerns, we propose to perform generative clustering
by using importance sampling over $\qspace$.

First of all, we use the typical two-step iteration algorithm, as
generally adopted in $k$-means. This is the common solution for all
Bregman hard clustering problems \citep{banerjee2005clustering}, by
repeating the following two steps:
\begin{enumerate}
    \item Assign each document $\doc$ to its closest
    cluster by distance $d(\doc,k)$.
    \item Update every cluster centroid to minimize the
    within-cluster total distance from the documents to the centroid.
\end{enumerate}
We formally implement this as Algorithm \ref{alg:clustering}.
The $d(\doc,k)$ function is defined in the following subsection.  The
algorithm starts in line 1 by sampling a set of texts $\queries =
\{\query_1,\query_2,\cdots,\query_J \}$ from the LLM, as explained
further in Section \ref{sec:proposal}.  Then, the algorithm computes
two matrices, which both have rows corresponding to documents and
columns corresponding to sampled texts in $\queries$. The first matrix
(line 2) is the probability matrix of $\mathbf{P}_{ij}\equiv
p(y_i|x_i)$, acquired from the LLM. After a {\em clipping} procedure
for $\mathbf{P}_{ij}$ (line 3, Section \ref{sec:clipping}) and
estimation of the {\em proposal} distribution $\phi(\query_j)$ (line
4, Section \ref{sec:proposal}), the second matrix (lines 4,5) is acquired
as regularized importance weights $\mathbf{W}$ such that $\mathbf{W}_{ij}
\equiv (p(\query_j|\doc_i) / \phi(\query_j))^\alpha$, where
$\alpha$ is a hyperparameter explained in the following section. Next,
the clustering algorithm, defined in lines 8-15, is called (line 6).
After centroid initialization (line 9, Section \ref{sec:centroid}),
the two steps described above are repeated (lines 11 and 12-13,
respectively).

Algorithm \ref{alg:clustering} is guaranteed to converge (to a local
minimum), and we provide a proof in Appendix \ref{sec:converge}
(Proposition \ref{thm:converge}).

\begin{algorithm}[tb]
\small
\caption{Generative clustering of documents.}
\label{alg:clustering}
\textbf{Input}: Documents $\docs$; a finetuned language model. \\
\textbf{Parameters}: Number of clusters, $K$; number of text samples,
$J$; proposal distribution, $\phi$; regularization factor, $\alpha$. \\
\textbf{Output}: Cluster assignment function $f:\docs\to\{1,2,\cdots,K\}$.

\begin{algorithmic}[1]
    \State \multilinecol{Generate $\queries=\{\query_1,\cdots,\query_J\}$
        by i.i.d. sampling from the language model. }{}
    \State \multilinecol{
        Compute $\mathbf{P}_{ij}=p(\query_j|\doc_i)$ for
        any $\doc_i\in\docs$ and $\query_j\in\queries$ via the
        language model.
        }{
        Eq.\eqref{eq:decomposition}
        }
    \State \multilinecol{
        Clip $\mathbf{P}_{ij}$ to avoid outlier values.
    }{Eq.\eqref{eq:clip}}
    \State \multilinecol{
        For $\query_j\in \queries$, estimate $\phi(\query_j)$.
        }{
        Eq.\eqref{eq:vmp}
        }
    \State \multilinecol{Compute the importance weight matrix\\
        $
        \mathbf{W}_{ij} = (\mathbf{P}_{ij} / \phi(\query_j))^\alpha$.
      }{}
    \State $f \gets \Call{Clustering}{\mathbf{P}, \mathbf{W}}$ \\

    \Function{Clustering}{$\mathbf{P}, \mathbf{W}$}
    \State \multilinecol{
        Initialize cluster centroids $\mathbf{c}_k$ ($\forall k$) by
        randomly selecting a row of $\mathbf{W}$ and normalizing it.
        }{Eq.\eqref{eq:init}}
    \Do
        \State \multilinecol{
            $f(\doc_i)\gets$ closest cluster to $\doc_i$,
            $\forall\doc_i\in\docs$.
            }{Eq.\eqref{eq:ris} }
        \State \multilinecol{
            Update the centroid $\mathbf{c}_k$.
            }{Eq.\eqref{eq:optim-centroid}}
        \State \multilinecol{
            Compute the total distortion \\
            $\sum_{\doc\in\docs} \hat{d}(\doc,f(\doc))$.
            }{Eq.\eqref{eq:ris}}
    \doWhile{the total distortion improves.}
    \State \Return $f$
    \EndFunction
\end{algorithmic}
\end{algorithm}

\subsection{Distortion Function $d(\doc,k)$}
\label{sec:distortion}

The distortion function $d(\doc, k)$ was defined in Eq.
\eqref{eq:distortion} as the KL divergence between $p(\qvar|\doc)$ and
$p(\qvar|k)$, but it is not computable because of the infinite
$\qspace$. Therefore, we apply the {\em importance sampling} (IS)
technique \citep{kloek1978bayesian} to estimate $d$. IS is a
statistical technique to estimate properties of a particular
distribution from a different distribution $\phi$ called the {\em
proposal}. The technique's effectiveness is demonstrated by the
following equality:
\begin{equation*}
    \mathbb{E}_{\qvar\sim p(\qvar|\doc)}\left[\log\frac{p(\qvar|\doc)}{p(\qvar|k)}\right]
    = \mathbb{E}_{\qvar\sim\phi}\left[
      \frac{p(\qvar|\doc)}{\phi(\qvar)}
      \log\frac{p(\qvar|\doc)}{p(\qvar|k)}
      \right],
\end{equation*}
where the left side is the KL divergence, and
$\frac{p(\qvar|\doc)}{\phi(\qvar)}$ on the right side is called
the {\em importance weight}. By using IS, the KL divergence for each
document can be estimated efficiently by using samples generated from
a shared $\phi$ (i.e., $\queries$ in line 1 of Algorithm
\ref{alg:clustering}) across all documents, instead of a separate
distribution $p(\qvar|\doc)$ for each document.

A long-standing problem of IS-based estimators is the potentially
large variance if the proposal $\phi$ is poorly selected and distant
from the original distribution $p(\qvar|\doc)$. In addition to
carefully selecting $\phi$, as detailed in Section \ref{sec:proposal},
we adopt a {\em regularized} version of IS (RIS)
\citep{korba2022adaptive}, which imposes an additional power function
(with parameter $\alpha$) on the importance weights. Our estimator
$\hat{d}$ based on RIS is defined as follows:
\begin{equation}
    \hat{d}(\doc,k) \equiv \frac{1}{J}\sum_{\substack{j=1 \\ \query_j\sim \phi(\qvar)}}^J \left(
        \frac{p(\query_j|\doc)}{\phi(\query_j)}
    \right)^\alpha \log\frac{p(\query_j|\doc)}{p(\query_j|k)}.
    \label{eq:ris}
\end{equation}
Here, $\alpha\in[0,1]$ defines the strength of regularization. When
$\alpha=1$, Eq. \eqref{eq:ris} converges to the true KL divergence
value as $J\to\infty$; otherwise, Eq. \eqref{eq:ris} is biased, and
smaller $\alpha$ leads to more bias. $\alpha$ plays a central role in
the experimental section and is further discussed after that, in
Section \ref{sec:biasness}. In this paper, $\alpha$ was set to 0.25
for all experiments.
Notably, there are other choices for importance weight regularization, e.g.,
\citet{aouali2024unified}.

Using the matrices defined in Section \ref{sec:twostep},
this function $\hat{d}$ is rewritten as follows:
\begin{equation}
\hat{d}(\doc_i, k) = \frac{1}{J} \sum_{j=1}^J \mathbf{W}_{ij} \log \frac{\mathbf{P}_{ij}}{\mathbf{c}_k(j)},
\end{equation}
where $\mathbf{c}_k$ is a vector representing the cluster centroid $p(\qvar|k)$,
as defined later in Eq. \eqref{eq:centroid}. The weight matrix $\mathbf{W}$
assigns importance to each text $\query_j$, typically favoring texts with
higher probabilities (e.g., shorter texts).
Unlike the Euclidean distance, which treats all dimensions equally, the weight
matrix $\mathbf{W}$ introduces a preference for specific dimensions (i.e.,
specific texts in $\queries$), enabling the algorithm to optimize an
information-theoretic quantity.

\subsection{Proposal Distribution $\phi(\qvar)$}
\label{sec:proposal}

The proposal distribution $\phi(\qvar)$ in Eq. \eqref{eq:ris} serves
two functions: (1) construction of
$\queries=[\query_1,\cdots,\query_J]$ by sampling $\query_j$ from
$\phi$, and (2) estimation of the probability mass $\phi(\query_j)$
for any text $\query_j\in\queries$.

The idea of tuning the proposal $\phi$ for better estimation of some
quantity of a single distribution is not novel. In our setting,
however, we must estimate $d(\doc,k)$ for all $\doc$ and $k$, i.e.,
for multiple distributions. Therefore, we must carefully choose
$\phi$, instead of using some standard distribution.

In this paper, we choose the prior distribution $p(\qvar)$ as the
proposal $\phi(\qvar)$, for two reasons. First, $p(\qvar)$ is close to
$p(\qvar|\doc)$ for all $\doc\in\docs$, thus minimizing the RIS
sampler's overall variance across all documents. Second, it allows for
a straightforward sampling process:
\begin{description}
  \item[\rm(i)] Randomly select a document $\doc$ from $\docs$ uniformly.
  \item[\rm(ii)] Generate a text $\query_j$ from $\doc$ by using the
  language model.
\end{description}
Here, the documents $\docs$ are assumed to be i.i.d. samples from the
prior distribution $p(\dvar)$.

For estimating $p(\query_j)$, a basic approach is to average
$p(\query_j|\doc)$ across all $\doc\in\docs$, which is equivalent to
averaging the $j$-th column of matrix $\mathbf{P}$. However, we have
found that the following estimator works better with Algorithm
\ref{alg:clustering}:
\begin{equation}
  \phi(\query_j)=p(\query_j) \gets \left(
  \frac{1}{|\docs|} \sum_{\doc\in\docs} p(\query_j|\doc)^{2\alpha}
  \right)^{1/2\alpha}, \label{eq:vmp}
\end{equation}
where $\alpha$ is the regularization parameter in Eq. \eqref{eq:ris}.
This function in Eq. \eqref{eq:vmp} represents the optimal case for
minimal variance. The function's theoretical background and
approximation are discussed in Appendix \ref{sec:optim-phi}
(Proposition \ref{thm:phi}).

\subsection{Estimation of Cluster Centroid $p(\qvar|k)$.}
\label{sec:centroid}

Each cluster is specified by its ``centroid'' $p(\qvar|k)$, which is a
probability distribution defined over the infinite space $\qspace$. In
our algorithm, $p(\qvar|k)$ is represented as a vector,
\begin{equation}
    \mathbf{c}_k = [p(\query_1|k), \cdots, p(\query_J|k)],
    \label{eq:centroid}
\end{equation}
comprising the values of $p(\qvar|k)$ on $\queries$, which is a subset
of the infinite $\qspace$. The total probability on $\queries$ is
denoted by $C_k=\sum_{j} p(\query_j|k) \leq 1$. Here, $C_k$ is a
hyperparameter to adjust the assignment preference for cluster $k$, as
increasing $C_k$ would reduce the distance (i.e., the KL divergence)
from any document to cluster $k$. Because we have no prior knowledge
about each cluster, we set $C_1=C_2=\cdots=C_K=:C$ in this paper,
meaning there is no preference among clusters. Furthermore, the order
of the distances from a document to all cluster centroids is preserved
under variation of $C$, and we thus set $C=1$ for simplicity.

For initialization, $\mathbf{c}_k$ is set to a random row of
$\mathbf{W}$ and normalized to sum to 1. Let $i_k$ denote the
selected row's index. Then, every entry of $\mathbf{c}_k$ is
initialized as follows:
\begin{equation}
    \mathbf{c}_k(j) \gets \mathbf{W}_{i_k j} /
    \sum_{j'=1}^J \mathbf{W}_{i_k j'}.
    \label{eq:init}
\end{equation}
We also tried a more sophisticated initialization method by adapting the
$k$-means++ strategy\citep{arthur2006kmeanspp}. However, the
results were not significantly different on the datasets we used. We
present the results in Appendix \ref{sec:kmeanspp}.

After the assignment step in each iteration, the cluster centroids are
updated. For cluster $k$, the optimal centroid vector $\mathbf{c}_k^*$
that minimizes the within-cluster total distortion is parallel to the
mean vector of the rows in $\mathbf{W}$ for the documents belonging to
this cluster. That is, at each iteration, every cluster centroid is
updated as follows:
\begin{equation}
    \mathbf{c}_k(j) \gets \mathbf{c}_k^*(j) = \frac{1}{Z}
    \sum_{\doc_i\in f^{-1}(k)} \mathbf{W}_{ij},
    \label{eq:optim-centroid}
\end{equation}
where $Z=\sum_{j=1}^J \mathbf{c}_k^*(j)$ is the normalization term.
The optimality is shown in Appendix \ref{sec:optim-centroid}
(Proposition \ref{thm:centroid}).

\subsection{Probability Clipping}
\label{sec:clipping}

Probability clipping is common in importance sampling
\citep{wu2020improving} to stabilize the estimation procedure, by
eliminating outlier values in $\mathbf{P}$. Outliers typically occur
in $\mathbf{P}_{ij}$ when $\query_j$ is generated from $\doc_i$, and
they can be several magnitudes larger than other values in the same
column, which destabilize KL divergence estimation. We use the
following criterion to reset these outliers in the logarithm domain:
\begin{equation}
    \log\mathbf{P}_{ij} = \begin{cases}
    \mu_j + 5\sigma_j & \text{if}~\log \mathbf{P}_{ij} > \mu_j + 5\sigma_j, \\
    \log\mathbf{P}_{ij} & \text{otherwise},
    \end{cases}
    \label{eq:clip}
\end{equation}
where $\mu_j$ and $\sigma_j$ are the mean and standard deviation of
the $j$-th column of $\log\mathbf{P}_{ij}$.
This clipping step can be seen as a further regularization of the
importance weights \citep{aouali2024unified}.

\section{Evaluation of Generative Clustering}
\label{sec:eval-clustering}

\subsection{Data}

We conducted an evaluation of our method using four document
clustering datasets: R2, R5, AG News, and Yahoo! Answers, as
summarized in Table \ref{tbl:dataset}.

\begin{table}[bp]
\centering
\small
\begin{tabularx}{\linewidth}{@{\extracolsep{\fill}} lrrr}
    \toprule
    Dataset & \# documents & mean \# words & \# clusters \\
    \midrule
    R2 & 6,397 & 92.4 & 2 \\
    R5 & 8,194 & 116.4 & 5 \\
    AG News & 127,600 & 37.8 & 4 \\
    Yahoo! Answers & 1,460,000 & 69.9 & 10 \\
    \bottomrule
\end{tabularx}
\caption{
Document clustering datasets used for clustering.}
\label{tbl:dataset}
\end{table}

\begin{table*}[t]
\centering
\small
\begin{tabularx}{\linewidth}{@{\extracolsep{\fill}} lcccccccccccc}
    \toprule
    Method & \multicolumn{3}{c}{R2} & \multicolumn{3}{c}{R5} & \multicolumn{3}{c}{AG News} & \multicolumn{3}{c}{Yahoo! Answers}  \\
    \cmidrule{2-4} \cmidrule{5-7} \cmidrule{8-10} \cmidrule{11-13}
         & ACC & NMI & ARI & ACC & NMI & ARI & ACC & NMI & ARI & ACC & NMI & ARI \\
    \midrule
    \multicolumn{13}{c}{\em $k$-means clustering} \\
    \multicolumn{4}{l}{\em Non-neural document embeddings} \\
    Bag of words                       & 65.0 &  3.3 &  5.9 & 40.1 & 20.0 & 16.1 & 29.4 & 1.68 &  0.7 & 15.2 &  3.1 &  1.0 \\
    tf-idf                             & 77.7 & 40.2 & 31.1 & 48.6 & 33.2 & 18.6 & 43.0 & 15.9 & 15.5 & 18.9 &  6.0 &  2.5 \\
    LDA \citep{blei2003latent}         & 85.3 & 49.9 & 48.5 & 61.7 & 40.4 & 37.2 & 45.1 & 28.4 & 22.2 & 21.8 & 10.0 &  4.5 \\
    \hdashline[0.5pt/1pt] \noalign{\vskip 0.5ex}
    \multicolumn{13}{l}{\em BERT/SBERT document embeddings (different pretrained models)} \\
    Bert (base, uncased)      & 75.6 & 31.1 & 28.1 & 58.2 & 27.3 & 35.0 & 59.9 & 38.2 & 30.9 & 20.9 & 9.4 & 4.5 \\
    Bert-base-nli-mean-tokens & 86.7 & 49.6 & 53.7 & 57.1 & 36.3 & 36.7 & 64.1 & 29.8 & 28.2 & 31.9 & 15.1 & 10.7 \\
    All-minilm-l12-v2         & 86.9 & 54.3 & 54.5 & 76.7 & 65.6 & 58.6 & 74.4 & 56.7 & 55.5 & 58.2 & 42.8 & 34.8 \\
    All-mpnet-base-v2         & 90.3 & 61.7 & 65.0 & 73.2 & 68.9 & 58.5 & 69.2 & 55.0 & 52.0 & 56.7 & 39.8 & 32.4 \\
    All-distilroberta-v1      & 92.0 & 65.6 & 70.5 & 69.7 & 65.2 & 62.4 & 67.0 & 56.1 & 51.9 & 57.5 & 42.6 & 34.2 \\
    \hdashline[0.5pt/1pt] \noalign{\vskip 0.5ex}
    \multicolumn{4}{l}{\em Doc2query-based document embeddings} \\
    Doc2Query encoder                  & 74.2 & 32.5 & 24.8 & 48.6 & 24.2 & 18.7 & 38.1 & 8.4 & 7.6 & 16.5 & 4.0 & 1.5 \\
    Doc2Query log-prob. $\mathbf{P}$ & 86.1 & 51.5 & 52.3 & 56.2 & 54.1 & 43.9 & 74.8 & 53.2 & 52.1 & 43.0 & 29.6 & 20.7 \\
    \midrule
    \multicolumn{13}{c}{\em Other non-$k$-means clustering methods} \\
    GSDMM\textsuperscript{$\dagger$} \citep{yin2014dirichlet} & 74.5 & 62.3 & 45.3 & 57.6 & 59.5 & 44.5 & 43.5 & 42.5 & 30.4 & 49.6 & 36.1 & 34.2 \\
    DEC\textsuperscript{$\dagger$} {\scriptsize\citep{xie2016unsupervised}} & 76.0 & 37.4 & 36.9 & 37.9 & 22.2 & 18.5 & 47.0 & 12.4 & 11.5 & 14.2 & 1.2 & 1.1 \\
    IDEC\textsuperscript{$\dagger$} \citep{guo2017improved}   & 77.7 & 32.3 & 31.5 & 46.1 & 34.8 & 29.5 & 63.5 & 29.5 & 29.4 & 22.0 & 10.3 & 9.0 \\
    STC\textsuperscript{$\dagger$}  \citep{xu2017self}        & 77.9 & 35.7 & 36.5 & 57.1 & 40.8 & 37.5 & 64.6 & 39.2 & 38.8 & 37.7 & 23.6 & 21.4 \\
    DFTC\textsuperscript{$\dagger$} \citep{guan2022deep}      & 84.7 & 50.4 & 49.0 & 69.6 & 61.5 & 61.5 & 82.1 & 60.3 & 60.0 & 51.1 & 35.0 & 34.4 \\
    \midrule
    \multicolumn{13}{c}{\em Generative clustering (ours)} \\
    GC ($\alpha=0.25$) & \textbf{96.1} & \textbf{77.8} & \textbf{84.9} & \underline{79.1} & \textbf{71.5} & \textbf{69.1} & \textbf{85.9} & \textbf{64.2} & \textbf{67.2} & \textbf{60.7} & \textbf{43.7} & \underline{35.5} \\
      & (0.0) & (0.1) & (0.1) & (2.9) & (2.1) & (2.9) & (0.6) & (0.8) & (1.2) & (0.7) & (0.2) & (0.3)   \\
    \hdashline[0.5pt/1pt] \noalign{\vskip 0.5ex}
    \multicolumn{13}{l}{\em Ablated versions} \\
    - $\alpha=1$ (unbiased KL estimation) & 78.0 & 25.8 & 33.4 & 57.3 & 31.1 & 33.7 & 55.6 & 22.9 & 22.0 & 34.5 & 18.7 & 11.0 \\
    - Na\"ive $p(\qvar)$ estimator  & \underline{95.9} & \underline{77.1} & \underline{84.2} & \textbf{79.3} & \underline{71.1} & \underline{68.5} & \underline{85.0} & \underline{63.2} & \underline{65.5} & \underline{60.5} & \underline{43.4} & \textbf{35.6} \\
    - No probability clipping & 95.1 & 73.2 & 81.2 & 76.4 & 65.6 & 63.0 & 80.6 & 60.7 & 59.4 & 57.5 & 42.3 & 33.7 \\
    \bottomrule
\end{tabularx}
\caption{
Clustering performance of various methods (rows) on four document
datasets (columns). Methods marked with $\dagger$ in the first column
were taken from \citet{guan2022deep}. The values in parentheses are
standard deviations across 100 repeated experiments. The best score in
each column is in bold and the second best is underlined. }
\label{tbl:clustering}
\end{table*}

R2 and R5 are subsets of Reuters-21587 and respectively contain
documents from the largest two (\textsc{earn}, \textsc{acq}) and
largest five (also \textsc{crude}, \textsc{trade}, \textsc{money-fx})
topics, following \citet{guan2022deep}. For AG News \citep{agnews}, we
used the version provided by \citet{zhang2015character}. Yahoo!
Answers is a more challenging dataset with more documents and
clusters.

For R2 and R5, we discarded documents tagged ambiguously with multiple topics.
For AG News and Yahoo! Answers, we merged the training and test splits, as our
method is unsupervised.

\subsection{Evaluation Metrics \& Settings}

Following previous works \citep{guan2022deep,zhou2022comprehensive},
we evaluated clustering methods by using three metrics:
\begin{description}
\item[Accuracy (ACC):] The percentage of correct cluster assignments,
 accounting for label permutations by maximizing the accuracy over all
 permutations with the Jonker-Volgenant
 algorithm (implemented in Python via \\
 \texttt{\small scipy.optimize.linear\_sum\_assignment}).
\item[Normalized Mutual Information (NMI):] A measure of the mutual
 information between true and predicted labels, normalized by the
 geometric mean of their Shannon entropies \citep{nmi}.
\item[Adjusted Rand Index (ARI):] A variant of the Rand index
 that adjusts for random labeling and reduces the sensitivity to the
 number of clusters \citep{hubert1985comparing}.
\end{description}

Clustering algorithms are sensitive to initialization and affected by
randomness. To mitigate this, we performed model selection by running each
method 10 times with different seeds and selecting the run with the lowest total
distortion. We repeated this process 100 times, and report the mean performance
of the selected runs. For $k$-means, the distortion is the sum of the squared
distances from points to their centroids. For our method, the distortion is
$\sum_{\doc\in\docs} \hat{d}(\doc, f(\doc))$, where $\hat{d}$ is defined in Eq.
\eqref{eq:ris} and $f$ is the cluster assignment function. This model selection
step uses no label information and is standard in practice.

For the language model, we used the pretrained {\em doc2query} model
\textrm{\small
all-with\_prefix-t5-base-v1}.\footnote{\url{https://huggingface.co/doc2query/all-with_prefix-t5-base-v1}}
This model, trained on multiple tasks, attaches a task-specific prefix
to each document. We used the general prefix ``text2query:'' to
generate $\queries$, and we evaluated the generation probabilities
from $\docs$ to $\queries$. We set $\alpha$ to 0.25 and $J$ to 1024 by
default. We set $K$ to the number of clusters in each dataset, as seen
in the rightmost column of Table \ref{tbl:dataset}.

For BERT-based baselines, we tested the original BERT model
\citep{devlin2019bert} and multiple SBERT models \citep{sbert2019}
that were fine-tuned for document representation and clustering. The
pre-trained SBERT models are available at
\url{https://sbert.net/docs/sentence_transformer/pretrained_models.html}.

\subsection{Results}

The clustering results are summarized in Table \ref{tbl:clustering},
with the rows representing methods or models and the columns
representing datasets and evaluation metrics. The methods are
categorized into three groups: $k$-means (top), non-$k$-means
(middle), and our generative clustering (GC, bottom).

Our method consistently outperformed the others across all datasets,
often by significant margins. For instance, on the R2 dataset, GC
achieved 96.1\% accuracy, reducing the error rate from 8.0\% to 3.9\%.
The NMI and ARI scores also improved to 77.8 (from 65.6) and 84.9
(from 70.5), respectively. This advantage extended across the
datasets, ranging from R2 (under 10K documents) to Yahoo! Answers
(over 1 million).

While $k$-means clustering on SBERT embeddings sometimes approached
GC's performance, such as with \textrm{\small all-distilroberta-v1}
on R2, GC remained the top performer across all datasets without
fine-tuning.

In the table, rows 9 and 10 involve the same doc2query model as GC but
applied it differently. For row 9, $k$-means was applied to embeddings
from the model's encoder, similar to document processing with BERT.
For row 10, $k$-means was applied to $\log\mathbf{P}$ generated by the
doc2query decoder.

Comparison of row 9 with GC highlights the gains from translating
documents $\docs$ into $\queries$ with an LLM. Although both methods
used the doc2query model, the row-9 results were only comparable to
BERT's performance, while GC performed significantly better.

Row 10 shows improved results, nearing those of SBERT-based methods,
despite $k$-means being less suited for log-probabilities. This
demonstrates the potential of using generation probabilities for
clustering. Furthermore, GC's performance margin over the row-10
results demonstrates the advantage of not relying on the Gaussian
assumption underlying $k$-means but instead using a discrete
distribution over $\qspace$ to better approximate the data's true
distribution.

\begin{figure*}[htbp]
\centering
\small
\begin{minipage}[t]{0.22\linewidth}
    \centering
    ~~~~~~R2
    \includegraphics[width=\linewidth]{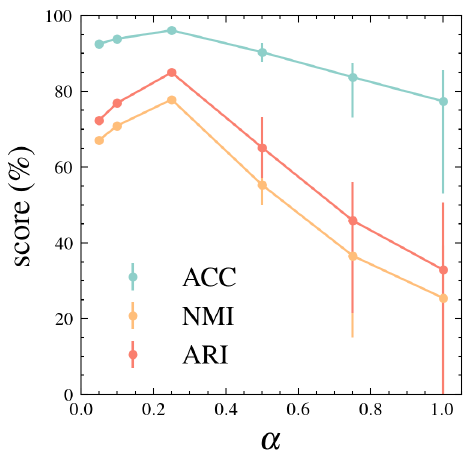} \\
    ~~~~~~(a)
\end{minipage}
\begin{minipage}[t]{0.22\linewidth}
    \centering
    ~~~~~~R5
    \includegraphics[width=\linewidth]{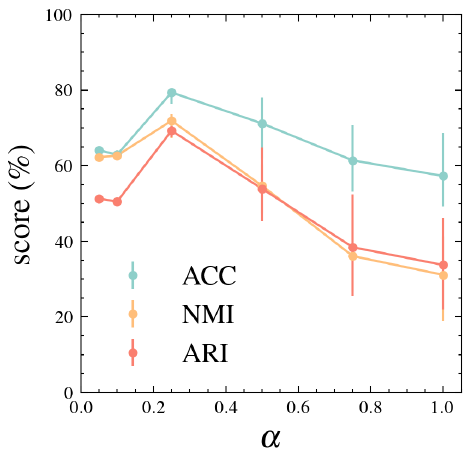} \\
    ~~~~~~(b)
\end{minipage}
\begin{minipage}[t]{0.22\linewidth}
    \centering
    ~~~~~~AG News
    \includegraphics[width=\linewidth]{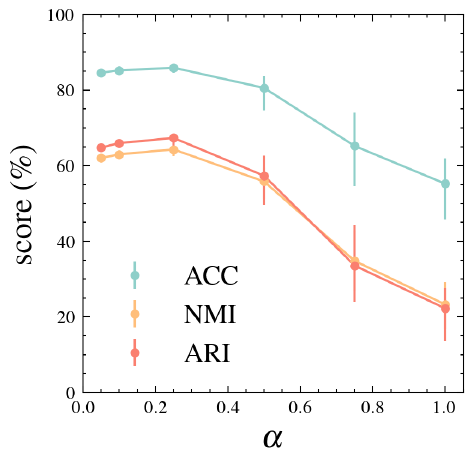} \\
    ~~~~~~(c)
\end{minipage}
\begin{minipage}[t]{0.22\linewidth}
    \centering
    ~~~~~~Yahoo! Answers
    \includegraphics[width=\linewidth]{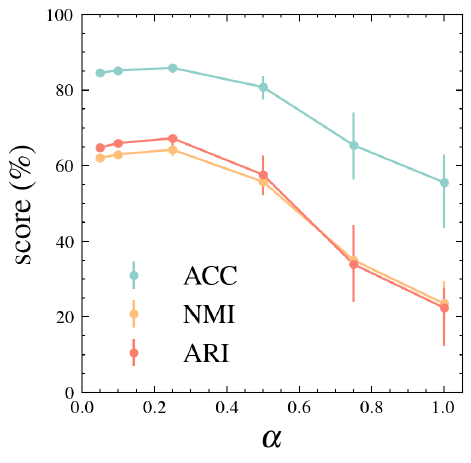} \\
    ~~~~~~(d)
\end{minipage}
\begin{minipage}[t]{0.22\linewidth}
    \centering
    \includegraphics[width=\linewidth]{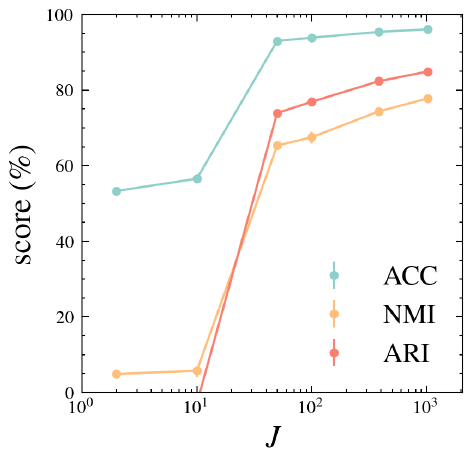} \\
    ~~~~~~(e)
\end{minipage}
\begin{minipage}[t]{0.22\linewidth}
    \centering
    \includegraphics[width=\linewidth]{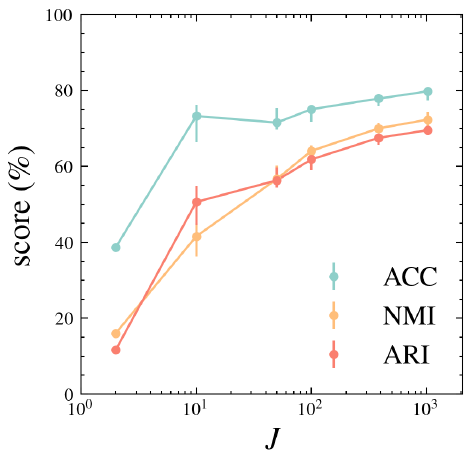} \\
    ~~~~~~(f)
\end{minipage}
\begin{minipage}[t]{0.22\linewidth}
    \centering
    \includegraphics[width=\linewidth]{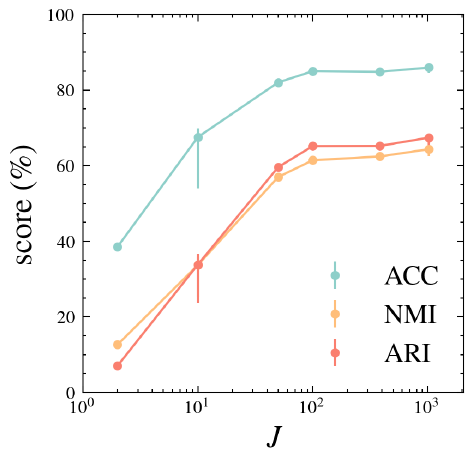} \\
    ~~~~~~(g)
\end{minipage}
\begin{minipage}[t]{0.22\linewidth}
    \centering
    \includegraphics[width=\linewidth]{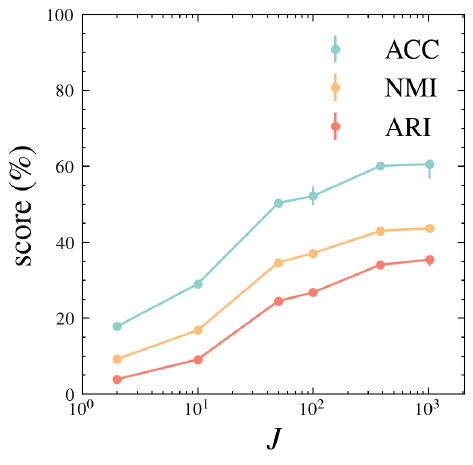} \\
    ~~~~~~(h)
\end{minipage}
\caption{
Performance of GC on the four datasets (columns) with varying (a-d)
$\alpha$ values or (e-h) $J$ values. Different colors represent
different clustering evaluation metrics, with error bars indicating
the 95\% confidence interval, based on 100 repeated experiments. Error
bars for some $\alpha$ and $J$ values are too small to be visible. }
\label{fig:alpha-J}
\end{figure*}

\paragraph{Ablated versions.}
The bottom of Table \ref{tbl:clustering} shows the results for three
ablated versions of GC. These versions used $\alpha=1$ instead of the
recommended $\alpha=0.25$, estimated the proposal distribution
$p(\qvar)$ with a na\"ive estimator instead of Eq. \eqref{eq:vmp}, and
omitted the probability clipping step in Eq. \eqref{eq:clip}.

The largest performance loss occurred with $\alpha=1.00$ instead of
$\alpha=0.25$, an interesting outcome that we discuss in Section
\ref{sec:biasness}. Results for other
$\alpha$ values are given in Figures \ref{fig:alpha-J}(a-d) for the four
datasets. $\alpha=0.25$ consistently yielded the best performance across all
datasets. The plots represent mean values over 100 repeated experiments, with
error bars indicating the 95\% confidence interval.

Figures \ref{fig:alpha-J}(e-h) show the performance as a function of the sample
size $J$ for the four datasets. The plots represent mean values from 100
repeated experiments, with error bars indicating 95\% confidence intervals.
Increasing $J$ requires generating and evaluating more texts, but it yields
higher clustering scores and more stable outcomes. On smaller datasets (R2, R5
and AG News), performance curves converged for $J\geq 50$. For the largest
dataset, Yahoo! Answers, convergence was evident at $J\geq 384$, which is
smaller than the typically dimensionality of BERT embeddings (768).

The probability clipping step also brought consistent improvement
across all datasets. As for using a na\"ive estimator for $p(\qvar)$,
the performance decrease was less severe, but still noticeable on
the R2 and the AG News datasets.

\paragraph{Clustering with misspecified $K$} We also evaluated GC with
misspecified numbers of clusters, $K$. Results are summarized in Appendix
\ref{sec:misspec-K}. With $K$ misspecified, GC and SBERT-based methods suffered
performance degradation, but GC still steadily outperformed SBERT-based methods
across all tested $K$ values from 2 to 20. This suggests the robustness of GC to
the choice of $K$.

\section{Application to Generative Retrieval}
\label{sec:gdr}

Among vast application possibilities, GC can be naturally applied to
generative document retrieval (GDR), \citep{tay2022dsi}.  In GDR, the
$\queries$ are queries for the retrieval, and documents $\docs$ are
indexed by using prefix coding, which necessitates hierarchical
clustering.

\subsection{Hierarchical Clustering with Localized $\phi$}
\label{sec:hierarchical}

\begin{algorithm}[htbp]
\caption{Hierarchical clustering.}
\label{alg:hierarchical}
\small
\textbf{Input}:
    Documents $\tilde{\docs}\subset \docs$ in a sub-cluster;
    the generated texts $\queries=\{ \query_1,\cdots,\query_J \}$
    and the probability matrix $\mathbf{P}$ produced
    in Algorithm \ref{alg:clustering} (for the whole $\docs$).
    \\
\textbf{Parameters}: Number of clusters, $K$; number of text
samples, $J$; proposal distribution, $\phi(\qvar)$;
regularization factor, $\alpha$. \\
\textbf{Output}: Cluster assignment function
$f:\docs\to\{1,2,\cdots,K\}$.
\begin{algorithmic}[1]
    \State \multilinecol{
        For $\query_j\in \queries$,
        estimate $\tilde{\phi}(\query_j)$ using only the documents in
        $\tilde{\docs}$.
        }{Eq.\eqref{eq:vmp}}
    \State \multilinecol{
        Calculate the resampling weights $\mathbf{r}_{j}$.
        }{Eq.\eqref{eq:resampling}}
    \State
        \multilinecol{
        Resample $J$ texts from $\queries$ according to the weights
        $\mathbf{r}_j$, forming $\tilde{\queries}$ and
        correspondingly, $\tilde{\mathbf{P}}$, which is a reselection
        of $\mathbf{P}$'s columns with replacement.
        }{}
    \State \multilinecol{Compute the importance weight matrix\\
        $\tilde{\mathbf{W}}_{ij} = \left(\tilde{\mathbf{P}}_{ij}
        / \tilde{\phi}(\query_j) \right)^\alpha$.}{}
    \State $f \gets \Call{Clustering}{\tilde{\mathbf{P}},
        \tilde{\mathbf{W}}}$ from Algorithm \ref{alg:clustering}.
\end{algorithmic}
\end{algorithm}

The clustering in Algorithm \ref{alg:clustering} can be extended to a
hierarchical version, given as Algorithm \ref{alg:hierarchical}. When
clustering documents within a sub-cluster $\tilde{\docs} \subset
\docs$, the proposal distribution $\phi$ is localized to this
sub-cluster for improved KL divergence estimation and denoted as
$\tilde{\phi}$. This localized distribution is estimated using Eq.
\eqref{eq:vmp} with sub-cluster documents (see Appendix
\ref{sec:optim-phi} for the details).

We apply bootstrapping to generate samples from $\tilde{\phi}$ without
recalculating $\mathbf{P}$. Resampling is performed from
$\queries=[\query_1,\cdots,\query_J]$ with weights $\mathbf{r}$:
\begin{equation}
    \mathbf{r}_{j} =
    \left(\dfrac{\tilde{\phi}(\query_j)}{\phi(\query_j)}\right)^\alpha
    \label{eq:resampling}
\end{equation}

Repeating this resampling $J$ times generates a new text set
$\tilde{\queries}$, with its corresponding localized probability
matrix $\tilde{\mathbf{P}}$ and importance weights
$\tilde{\mathbf{W}}$ derived from $\mathbf{P}$ and $\tilde{\phi}$.
Then, $\tilde{\docs}$ is clustered by the iterative procedure in
Algorithm \ref{alg:clustering} with these localized matrices.

\subsection{Document Indexing Through Hierarchical Clustering}
\label{sec:indexing}

In GDR, documents are indexed using a prefix code generated by
hierarchical clustering. Each document is assigned a unique numerical
string based on its cluster index at each level of the hierarchy.

Previous methods used BERT-based vectors and $k$-means for this prefix
code construction, as in DSI \citep{tay2022dsi} and NCI
\citep{wang2022nci}. Recently, \citet{du2024bmi} proposed query-based
document clustering, but using BERT and $k$-means clustering, which
should be improved upon by our method.

After constructing the prefix code, a neural retrieval model is
trained to map queries to the numerical strings of their corresponding
documents. The clustering method's effectiveness was evaluated via the
retrieval accuracy. We used the same setting as in \citet{wang2022nci}
for the retrieval model and the training details. In other words, the
only change was the construction of document indexes with our GC
instead of $k$-means. Our indexing method is abbreviated as GCI. We
set $K=30$ and $J=4096$.

We compared our method with NCI and BMI, which use hierarchical
$k$-means clustering for document indexing. We used a small retrieval
model with about 30M parameters to highlight the clustering method's
advantage. Two datasets, NQ320K \citep{kwiatkowski2019natural} and MS
Marco Lite \citep{du2024bmi}, were used for evaluation, with the
numbers of documents listed in the top row of Table
\ref{tbl:gdr-mini}. We fine-tuned the doc2query models on the two datasets.

\subsection{Results}

\begin{table}[tbp]
\centering
\small
\begin{tabularx}{\linewidth}{lcccc}
    \toprule
        & \multicolumn{2}{c}{MS Marco Lite } & \multicolumn{2}{c}{NQ320K} \\
        & \multicolumn{2}{c}{(138,457 docs.)} & \multicolumn{2}{c}{(109,739 docs.)} \\
        \cmidrule{2-3} \cmidrule{4-5}
        & Rec@1 & MRR100 & Rec@1 & MRR100 \\
    \midrule
    NCI & 17.91 & 28.82 & 44.70 & 56.24 \\
    BMI & 23.78 & 35.51 & 55.17 & 65.52 \\
    \hdashline[0.5pt/1pt]
    {\em ours} \\
    \makecell{GCI ($\alpha=0.25$,\\$J=4096$)}  & \textbf{32.41} & \textbf{44.36} & \textbf{56.40} & \textbf{66.20} \\
    \midrule
    \multicolumn{5}{l}{\em Ablated versions} \\
    - $\alpha=0.1$   & 31.66 & 44.12 & 55.53 & 65.74 \\
    - $\alpha=0.5$   & 29.80 & 40.93 & 53.97 & 64.13 \\
    - $\alpha=1$ & 24.43 & 36.13 & 47.87 & 58.84 \\
    - $J=1024$ & 32.16 & 44.02 & 56.19 & 65.94 \\
    - $J=300$ & 29.08 & 40.93 & 53.62 & 64.07  \\
    - No localized $\phi$ & 30.37 & 42.26 & 55.20 & 65.41 \\
    \bottomrule
\end{tabularx}
\caption{Retrieval accuracies on document indexes acquired with
different clustering methods, using a T5 retrieval model with $\sim$
30M parameters. The best scores are in bold. }
\label{tbl:gdr-mini}
\end{table}

The upper half of Table \ref{tbl:gdr-mini} compares the different
clustering-based indexing methods. On MS Marco Lite, BMI outperformed
NCI, and our GCI significantly outperformed both. Compared with BMI,
our method achieved an 8.63 (36\%) increase in Recall@1 and an 8.85
(25\%) increase in MRR@100.

Ablation tests (lower half of Table \ref{tbl:gdr-mini}) showed consistency with
the clustering evaluations reported in Table \ref{tbl:clustering}. The best
retrieval performance was achieved by $\alpha=0.25$, suggesting that it is a
robust choice for both clustering and generative retrieval. Moreover, while
reducing the sample size $J$ to smaller values (1024 and 300) decreased the
retrieval accuracy, the performance loss was insignificant for $J=1024$ compared
to $J=4096$. Furthermore, skipping the localization of $\phi$ to sub-clusters
(last row of Table \ref{tbl:clustering}) also reduced the retrieval accuracy.

\section{Discussions}
\label{sec:biasness}

\paragraph{Bias-Variance Tradeoff}
Previous research on importance sampling primarily focused on reducing
variance while maintaining an unbiased estimator (i.e., $\mathbb{E}[\hat{d}]=d$),
typically using $\alpha=1$ in Eq. \eqref{eq:ris} \citep{owen2000safe}.
However, clustering requires no such constraint of unbiasedness, as the
clustering results are invariant to scaling of $\hat{d}$ and robust to bias.

On the other hand, lowering $\alpha$ has a significant impact on reducing the
variance (or uncertainty) of the estimator $\hat{d}$, which is crucial for
$\hat{d}$ to reliably recover the true clustering structure of $d$. For language
models, the probabilities $p(\qvar|\doc)$ are typically right-skewed severely,
akin to the log-normal distribution, and reducing $\alpha$ can exponentially
reduce the variance of the estimator. This can be analyzed through the notion of
{\em effective sample size} \citep{hesterberg1995weighted}.
This explains why reducing $\alpha$ from 1 to 0.25 significantly improved
the NMI on the R2 dataset from 25.8\% to 77.8\% (Table \ref{tbl:clustering}).

However, too much reduction in $\alpha$ can lead to distortion of information,
and as the benefits of variance reduction diminish, the bias dominates the
error. As shown in Figures \ref{fig:alpha-J}(a-b), reducing $\alpha$ below 0.25
decreased the clustering performance. This reveals a tradeoff between bias and
variance in the importance sampling estimation of the KL divergence.

\paragraph{Choice of Language Model} We have focused on using the
doc2query family of models \citep{nogueira2019doc2query} in our experiments,
which are pretrained to generate queries from documents. This
choice is motivated by observing many real-world datasets
to be organized around queries or topics,
in addition to the content of the documents themselves. Nevertheless, the
doc2query models were not specifically designed for clustering, and how to
fine-tune language models for clustering is an interesting future research
direction.

\paragraph{Computational Complexity} The primary computational cost of our
method is the calculation of the probability matrix $\mathbf{P}$, which requires
evaluating $p(\query|\doc)$ for each pair of documents and queries. This cost
can be reduced by caching LLM states of $\doc$ and using low-precision
inference. We used the BF16 precision in our experiments and observed no
significant performance loss. On the R2 dataset, for example, calculation of
$\mathbf{P}$ finishes within 10 minutes on a single GPU.

\section{Conclusion}

This paper explored using generative large language models (LLMs) to
enhance document clustering. By translating documents into a broader
representation space, we captured richer information, leading to a
more effective clustering approach using the KL divergence.

Our results showed significant improvements across multiple datasets,
indicating that LLM-enhanced representations improve clustering
structure recovery. We also proposed an information-theoretic
clustering method based on importance sampling, which proved effective
and robust across all tested datasets. Additionally, our method
improved the retrieval accuracy in generative document retrieval
(GDR), demonstrating its potential for wide-ranging applications.

\section*{Acknowledgments} This work was supported by JST CREST Grant Number
JPMJCR2114. We thank the anonymous reviewers for their valuable comments and
suggestions.

\bibliography{main}

\begin{thebibliography}{41}
\providecommand{\natexlab}[1]{#1}

\bibitem[{Aouali et~al.(2024)Aouali, Brunel, Rohde, and
  Korba}]{aouali2024unified}
Aouali, I.; Brunel, V.-E.; Rohde, D.; and Korba, A. 2024.
\newblock Unified PAC-Bayesian Study of Pessimism for Offline Policy Learning
  with Regularized Importance Sampling.
\newblock \emph{arXiv preprint arXiv:2406.03434}.

\bibitem[{Arthur and Vassilvitskii(2006)}]{arthur2006kmeanspp}
Arthur, D.; and Vassilvitskii, S. 2006.
\newblock k-means++: The advantages of careful seeding.
\newblock Technical report, Stanford.

\bibitem[{Banerjee et~al.(2005)Banerjee, Merugu, Dhillon, Ghosh, and
  Lafferty}]{banerjee2005clustering}
Banerjee, A.; Merugu, S.; Dhillon, I.~S.; Ghosh, J.; and Lafferty, J. 2005.
\newblock Clustering with Bregman divergences.
\newblock \emph{Journal of machine learning research}, 6(10).

\bibitem[{Blei, Ng, and Jordan(2003)}]{blei2003latent}
Blei, D.~M.; Ng, A.~Y.; and Jordan, M.~I. 2003.
\newblock Latent dirichlet allocation.
\newblock \emph{Journal of machine Learning research}, 3(Jan): 993--1022.

\bibitem[{Devlin et~al.(2019)Devlin, Chang, Lee, and
  Toutanova}]{devlin2019bert}
Devlin, J.; Chang, M.-W.; Lee, K.; and Toutanova, K. 2019.
\newblock {BERT}: Pre-training of Deep Bidirectional Transformers for Language
  Understanding.
\newblock In Burstein, J.; Doran, C.; and Solorio, T., eds., \emph{Proceedings
  of the 2019 Conference of the North {A}merican Chapter of the Association for
  Computational Linguistics: Human Language Technologies, Volume 1 (Long and
  Short Papers)}, 4171--4186. Minneapolis, Minnesota: Association for
  Computational Linguistics.

\bibitem[{Dhillon, Mallela, and Modha(2003)}]{dhillon2003information}
Dhillon, I.~S.; Mallela, S.; and Modha, D.~S. 2003.
\newblock Information-theoretic co-clustering.
\newblock In \emph{Proceedings of the ninth ACM SIGKDD international conference
  on Knowledge discovery and data mining}, 89--98.

\bibitem[{Du, Xiu, and Tanaka-Ishii(2024)}]{du2024bmi}
Du, X.; Xiu, L.; and Tanaka-Ishii, K. 2024.
\newblock Bottleneck-Minimal Indexing for Generative Document Retrieval.
\newblock In \emph{Forty-first International Conference on Machine Learning}.

\bibitem[{Guan et~al.(2022)Guan, Zhang, Liang, Giunchiglia, Huang, and
  Feng}]{guan2022deep}
Guan, R.; Zhang, H.; Liang, Y.; Giunchiglia, F.; Huang, L.; and Feng, X. 2022.
\newblock Deep feature-based text clustering and its explanation.
\newblock \emph{IEEE Transactions on Knowledge and Data Engineering}, 34(8):
  3669--3680.

\bibitem[{Gulli(2005)}]{agnews}
Gulli, A. 2005.
\newblock AG news homepage.

\bibitem[{Guo et~al.(2017)Guo, Gao, Liu, and Yin}]{guo2017improved}
Guo, X.; Gao, L.; Liu, X.; and Yin, J. 2017.
\newblock Improved deep embedded clustering with local structure preservation.
\newblock In \emph{Ijcai}, volume~17, 1753--1759.

\bibitem[{Hesterberg(1995)}]{hesterberg1995weighted}
Hesterberg, T. 1995.
\newblock Weighted average importance sampling and defensive mixture
  distributions.
\newblock \emph{Technometrics}, 37(2): 185--194.

\bibitem[{Hubert and Arabie(1985)}]{hubert1985comparing}
Hubert, L.; and Arabie, P. 1985.
\newblock Comparing partitions.
\newblock \emph{Journal of classification}, 2: 193--218.

\bibitem[{Kloek and Van~Dijk(1978)}]{kloek1978bayesian}
Kloek, T.; and Van~Dijk, H.~K. 1978.
\newblock Bayesian estimates of equation system parameters: an application of
  integration by Monte Carlo.
\newblock \emph{Econometrica: Journal of the Econometric Society}, 1--19.

\bibitem[{Kojima et~al.(2022)Kojima, Gu, Reid, Matsuo, and
  Iwasawa}]{kojima2022large}
Kojima, T.; Gu, S.~S.; Reid, M.; Matsuo, Y.; and Iwasawa, Y. 2022.
\newblock Large language models are zero-shot reasoners.
\newblock \emph{Advances in neural information processing systems}, 35:
  22199--22213.

\bibitem[{Korba and Portier(2022)}]{korba2022adaptive}
Korba, A.; and Portier, F. 2022.
\newblock Adaptive importance sampling meets mirror descent: a bias-variance
  tradeoff.
\newblock In \emph{International Conference on Artificial Intelligence and
  Statistics}, 11503--11527. PMLR.

\bibitem[{Kwiatkowski et~al.(2019)Kwiatkowski, Palomaki, Redfield, Collins,
  Parikh, Alberti, Epstein, Polosukhin, Devlin, Lee
  et~al.}]{kwiatkowski2019natural}
Kwiatkowski, T.; Palomaki, J.; Redfield, O.; Collins, M.; Parikh, A.; Alberti,
  C.; Epstein, D.; Polosukhin, I.; Devlin, J.; Lee, K.; et~al. 2019.
\newblock Natural questions: a benchmark for question answering research.
\newblock \emph{Transactions of the Association for Computational Linguistics},
  7: 453--466.

\bibitem[{Mikolov et~al.(2013)Mikolov, Sutskever, Chen, Corrado, and
  Dean}]{mikolov2013distributed}
Mikolov, T.; Sutskever, I.; Chen, K.; Corrado, G.~S.; and Dean, J. 2013.
\newblock Distributed representations of words and phrases and their
  compositionality.
\newblock \emph{Advances in neural information processing systems}, 26.

\bibitem[{Nguyen, Wainwright, and Jordan(2010)}]{nguyen2010estimating}
Nguyen, X.; Wainwright, M.~J.; and Jordan, M.~I. 2010.
\newblock Estimating divergence functionals and the likelihood ratio by convex
  risk minimization.
\newblock \emph{IEEE Transactions on Information Theory}, 56(11): 5847--5861.

\bibitem[{Nogueira, Lin, and Epistemic(2019)}]{nogueira2019doc2query}
Nogueira, R.; Lin, J.; and Epistemic, A. 2019.
\newblock From doc2query to docTTTTTquery.
\newblock \emph{Online preprint}, 6: 2.

\bibitem[{OpenAI(2023)}]{openai2023gpt}
OpenAI. 2023.
\newblock Gpt-4 technical report.
\newblock \emph{arXiv preprint arXiv:2303.08774}.

\bibitem[{Owen and Zhou(2000)}]{owen2000safe}
Owen, A.; and Zhou, Y. 2000.
\newblock Safe and effective importance sampling.
\newblock \emph{Journal of the American Statistical Association}, 95(449):
  135--143.

\bibitem[{P{\'e}rez-Cruz(2008)}]{perez2008kullback}
P{\'e}rez-Cruz, F. 2008.
\newblock Kullback-Leibler divergence estimation of continuous distributions.
\newblock In \emph{2008 IEEE international symposium on information theory},
  1666--1670. IEEE.

\bibitem[{Peters et~al.(2018)Peters, Neumann, Iyyer, Gardner, Clark, Lee, and
  Zettlemoyer}]{elmo}
Peters, M.~E.; Neumann, M.; Iyyer, M.; Gardner, M.; Clark, C.; Lee, K.; and
  Zettlemoyer, L. 2018.
\newblock Deep Contextualized Word Representations.
\newblock In Walker, M.; Ji, H.; and Stent, A., eds., \emph{Proceedings of the
  2018 Conference of the North {A}merican Chapter of the Association for
  Computational Linguistics: Human Language Technologies, Volume 1 (Long
  Papers)}, 2227--2237. New Orleans, Louisiana: Association for Computational
  Linguistics.

\bibitem[{Raffel et~al.(2020)Raffel, Shazeer, Roberts, Lee, Narang, Matena,
  Zhou, Li, and Liu}]{raffel2020exploring}
Raffel, C.; Shazeer, N.; Roberts, A.; Lee, K.; Narang, S.; Matena, M.; Zhou,
  Y.; Li, W.; and Liu, P.~J. 2020.
\newblock Exploring the limits of transfer learning with a unified text-to-text
  transformer.
\newblock \emph{The Journal of Machine Learning Research}, 21(1): 5485--5551.

\bibitem[{Reimers and Gurevych(2019)}]{sbert2019}
Reimers, N.; and Gurevych, I. 2019.
\newblock Sentence-BERT: Sentence Embeddings using Siamese BERT-Networks.
\newblock In \emph{Proceedings of the 2019 Conference on Empirical Methods in
  Natural Language Processing}. Association for Computational Linguistics.

\bibitem[{Slonim and Tishby(2000)}]{slonim2000document}
Slonim, N.; and Tishby, N. 2000.
\newblock Document clustering using word clusters via the information
  bottleneck method.
\newblock In \emph{Proceedings of the 23rd annual international ACM SIGIR
  conference on Research and development in information retrieval}, 208--215.

\bibitem[{Subakti, Murfi, and Hariadi(2022)}]{subakti2022performance}
Subakti, A.; Murfi, H.; and Hariadi, N. 2022.
\newblock The performance of BERT as data representation of text clustering.
\newblock \emph{Journal of big Data}, 9(1): 15.

\bibitem[{Tay et~al.(2022)Tay, Tran, Dehghani, Ni, Bahri, Mehta, Qin, Hui,
  Zhao, Gupta, Schuster, Cohen, and Metzler}]{tay2022dsi}
Tay, Y.; Tran, V.; Dehghani, M.; Ni, J.; Bahri, D.; Mehta, H.; Qin, Z.; Hui,
  K.; Zhao, Z.; Gupta, J.; Schuster, T.; Cohen, W.~W.; and Metzler, D. 2022.
\newblock Transformer Memory as a Differentiable Search Index.
\newblock In Koyejo, S.; Mohamed, S.; Agarwal, A.; Belgrave, D.; Cho, K.; and
  Oh, A., eds., \emph{Advances in Neural Information Processing Systems},
  volume~35, 21831--21843. Curran Associates, Inc.

\bibitem[{Vinh, Epps, and Bailey(2010)}]{nmi}
Vinh, N.~X.; Epps, J.; and Bailey, J. 2010.
\newblock Information Theoretic Measures for Clusterings Comparison: Variants,
  Properties, Normalization and Correction for Chance.
\newblock \emph{Journal of Machine Learning Research}, 11(95): 2837--2854.

\bibitem[{Viswanathan et~al.(2023)Viswanathan, Gashteovski, Lawrence, Wu, and
  Neubig}]{viswanathan2023large}
Viswanathan, V.; Gashteovski, K.; Lawrence, C.; Wu, T.; and Neubig, G. 2023.
\newblock Large language models enable few-shot clustering.
\newblock \emph{arXiv preprint arXiv:2307.00524}.

\bibitem[{Wang, Kulkarni, and Verd{\'u}(2009)}]{wang2009divergence}
Wang, Q.; Kulkarni, S.~R.; and Verd{\'u}, S. 2009.
\newblock Divergence estimation for multidimensional densities via $ k
  $-Nearest-Neighbor distances.
\newblock \emph{IEEE Transactions on Information Theory}, 55(5): 2392--2405.

\bibitem[{Wang et~al.(2022)Wang, Hou, Wang, Miao, Wu, Sun, Chen, Xia, Chi,
  Zhao, Liu, Xie, Sun, Deng, Zhang, and Yang}]{wang2022nci}
Wang, Y.; Hou, Y.; Wang, H.; Miao, Z.; Wu, S.; Sun, H.; Chen, Q.; Xia, Y.; Chi,
  C.; Zhao, G.; Liu, Z.; Xie, X.; Sun, H.; Deng, W.; Zhang, Q.; and Yang, M.
  2022.
\newblock A Neural Corpus Indexer for Document Retrieval.
\newblock In Oh, A.~H.; Agarwal, A.; Belgrave, D.; and Cho, K., eds.,
  \emph{Advances in Neural Information Processing Systems}.

\bibitem[{Wu et~al.(2020)Wu, Zhou, Wilson, Xing, and Hu}]{wu2020improving}
Wu, Y.; Zhou, P.; Wilson, A.~G.; Xing, E.; and Hu, Z. 2020.
\newblock Improving gan training with probability ratio clipping and sample
  reweighting.
\newblock \emph{Advances in Neural Information Processing Systems}, 33:
  5729--5740.

\bibitem[{Xie, Girshick, and Farhadi(2016)}]{xie2016unsupervised}
Xie, J.; Girshick, R.; and Farhadi, A. 2016.
\newblock Unsupervised deep embedding for clustering analysis.
\newblock In \emph{International conference on machine learning}, 478--487.
  PMLR.

\bibitem[{Xu et~al.(2017)Xu, Xu, Wang, Zheng, Tian, and Zhao}]{xu2017self}
Xu, J.; Xu, B.; Wang, P.; Zheng, S.; Tian, G.; and Zhao, J. 2017.
\newblock Self-taught convolutional neural networks for short text clustering.
\newblock \emph{Neural Networks}, 88: 22--31.

\bibitem[{Yang et~al.(2022)Yang, Yan, Cheng, and Zhang}]{yang2022learning}
Yang, X.; Yan, J.; Cheng, Y.; and Zhang, Y. 2022.
\newblock Learning deep generative clustering via mutual information
  maximization.
\newblock \emph{IEEE Transactions on Neural Networks and Learning Systems},
  34(9): 6263--6275.

\bibitem[{Yin and Wang(2014)}]{yin2014dirichlet}
Yin, J.; and Wang, J. 2014.
\newblock A dirichlet multinomial mixture model-based approach for short text
  clustering.
\newblock In \emph{Proceedings of the 20th ACM SIGKDD international conference
  on Knowledge discovery and data mining}, 233--242.

\bibitem[{Zhang, Zhao, and LeCun(2015)}]{zhang2015character}
Zhang, X.; Zhao, J.; and LeCun, Y. 2015.
\newblock Character-level convolutional networks for text classification.
\newblock \emph{Advances in neural information processing systems}, 28.

\bibitem[{Zhang, Wang, and Shang(2023)}]{zhang2023clusterllm}
Zhang, Y.; Wang, Z.; and Shang, J. 2023.
\newblock Clusterllm: Large language models as a guide for text clustering.
\newblock \emph{arXiv preprint arXiv:2305.14871}.

\bibitem[{Zhong and Ghosh(2005)}]{zhong2005generative}
Zhong, S.; and Ghosh, J. 2005.
\newblock Generative model-based document clustering: a comparative study.
\newblock \emph{Knowledge and Information Systems}, 8: 374--384.

\bibitem[{Zhou et~al.(2022)Zhou, Xu, Zheng, Chen, Bu, Wu, Wang, Zhu, Ester
  et~al.}]{zhou2022comprehensive}
Zhou, S.; Xu, H.; Zheng, Z.; Chen, J.; Bu, J.; Wu, J.; Wang, X.; Zhu, W.;
  Ester, M.; et~al. 2022.
\newblock A comprehensive survey on deep clustering: Taxonomy, challenges, and
  future directions.
\newblock \emph{arXiv preprint arXiv:2206.07579}.

\end{thebibliography}

\clearpage
\appendix

\onecolumn

\begin{center}
  \LARGE\bf Technical Appendices for\\
  Information-Theoretic Generative Clustering of Documents
\end{center}
\vspace{2em}

\section{Mathematical Details}

\subsection{Convergence of the Clustering Algorithm}
\label{sec:converge}

\begin{proposition}
Algorithm \ref{alg:clustering} converges to a local minimum of the
total distortion:
\begin{equation}
D \equiv \sum_{\doc\in\docs} \hat{d}(\doc, f(\doc)),
\end{equation}
where $\hat{d}$ is defined in Eq. \eqref{eq:ris}.
\label{thm:converge}
\end{proposition}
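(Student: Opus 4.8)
The plan is to follow the standard monotone-improvement argument used for $k$-means and, more generally, for Bregman hard clustering. The key observation is that Algorithm \ref{alg:clustering} alternates between two steps, each of which cannot increase the total distortion $D = \sum_{\doc\in\docs} \hat{d}(\doc, f(\doc))$. First I would fix the centroids $\{\mathbf{c}_k\}$ and analyze the assignment step (line 11): since $f(\doc_i) \gets \argmin_k \hat{d}(\doc_i, k)$ assigns each document independently to its closest centroid, and the total distortion decomposes as a sum of per-document terms, this step achieves the pointwise minimum over $f$ and therefore cannot increase $D$. Second I would fix the assignment $f$ and analyze the centroid-update step (line 12): by Proposition \ref{thm:centroid}, the update in Eq. \eqref{eq:optim-centroid} sets each $\mathbf{c}_k$ to the exact minimizer of the within-cluster distortion $\sum_{\doc_i \in f^{-1}(k)} \hat{d}(\doc_i, k)$, so this step also cannot increase $D$. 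Composing the two, every full iteration produces a nonincreasing sequence of distortion values.

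Having established monotonicity, the next step is to argue convergence. Here I would emphasize that the number of distinct cluster assignments $f:\docs \to \{1,\dots,K\}$ is finite (at most $K^{|\docs|}$), and that each assignment determines the optimal centroids uniquely via Eq. \eqref{eq:optim-centroid}, hence the distortion value associated with each attained configuration is determined by $f$ alone. Combined with the fact that $D$ is nonincreasing and the loop terminates precisely when the distortion stops strictly improving (the \textbf{doWhile} condition), the sequence of distortion values is a bounded monotone sequence taking finitely many possible values, so it must stabilize after finitely many iterations. At termination the configuration is stationary under both update steps, which is exactly the definition of a local minimum of $D$ with respect to coordinate-wise (block) optimization over $f$ and $\{\mathbf{c}_k\}$.

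The main obstacle I anticipate is making precise the sense in which the limit is a ``local minimum,'' since the centroids $\mathbf{c}_k$ live in a continuous space while $f$ is discrete. The cleanest route is to cast the whole scheme as block-coordinate (alternating) minimization of the joint objective $D(f, \{\mathbf{c}_k\})$ and observe that it converges to a point that is simultaneously optimal in each block given the other — a partial optimum — which is the appropriate notion of local minimum here; I would state this explicitly to avoid overclaiming global optimality. A secondary subtlety is that a tie in the assignment step could in principle cause cycling, so I would adopt a fixed tie-breaking rule (or note that the strict-improvement stopping criterion already rules out nonterminating behavior), ensuring the distortion strictly decreases whenever the configuration changes and thereby guaranteeing termination after finitely many steps.
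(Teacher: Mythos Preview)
Your proposal is correct and follows essentially the same approach as the paper: both argue that each of the two alternating updates (assignment, then centroid) can only decrease the total distortion $D$, and hence the algorithm converges. Your version is in fact more careful than the paper's brief sketch, supplying the finiteness-of-assignments argument for termination and clarifying the block-coordinate sense of ``local minimum,'' but the underlying strategy is identical.
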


\begin{proof}
The generative clustering algorithm \ref{alg:clustering} employs a two-step
iterative process to alternately update the cluster assignment function
$f:\docs\mapsto\{1,2,\cdots,K\}$ and the cluster centroids $\mathbf{c}_k$.
Because the total distortion $D$ decreases after each update of
the cluster assignment function $f$ and the cluster centroids $\mathbf{c}_k$,
the algorithm converges to a local minimum of $D$.
\end{proof}

However, convergence to a global optimum is not guaranteed, as this is a more
general and much challenging problem, which is not addressed in this paper. In
practice, we find that using random initialization with different seeds and
performing model selection based on total distortion, similar to the approach
used in $k$-means, improves clustering performance.

\vspace{1em}
\subsection{Optimal Cluster Centroid}
\label{sec:optim-centroid}

\begin{proposition}(Distortion-minimal cluster centroid)
For a cluster $k$, the within-cluster total distortion
\begin{align}
    D_k(\mathbf{c}_k) \equiv \sum_{\doc\in f^{-1}(k)} \hat{d}(\doc, f(\doc))
    = \sum_{\doc_i\in f^{-1}(k)} \frac{1}{J} \sum_{j=1}^J
    \mathbf{W}_{ij} \log\frac{\mathbf{P}_{ij}}{\mathbf{c}_k(j)}
    \label{eq:cluster-distortion}
\end{align}
with respect to the cluster centroid $\mathbf{c}_k$,
subject to $\sum_{j=1}^J \mathbf{c}_k(j)=1$, is minimized at:
\begin{equation}
    \mathbf{c}_k^*(j) = \frac{1}{Z}
    \sum_{\doc_i\in f^{-1}(k)} \mathbf{W}_{ij},
    ~~~~j=1,2,\cdots,J,
\end{equation}
where $Z=\sum_{j=1}^J \mathbf{c}_k^*(j)$ is the normalization term.
\label{thm:centroid}
\end{proposition}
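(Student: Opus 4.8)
The plan is to treat this as a constrained optimization of a concave objective over the probability simplex and solve it with a single Lagrange multiplier. First I would expand the logarithm in Eq.~\eqref{eq:cluster-distortion} as $\log\mathbf{P}_{ij}-\log\mathbf{c}_k(j)$ and observe that the piece $\frac{1}{J}\sum_{\doc_i\in f^{-1}(k)}\sum_j \mathbf{W}_{ij}\log\mathbf{P}_{ij}$ does not involve the decision variable $\mathbf{c}_k$, so it is a constant that can be discarded. Minimizing $D_k(\mathbf{c}_k)$ is therefore equivalent to maximizing $\frac{1}{J}\sum_{j=1}^J S_j\log\mathbf{c}_k(j)$, where I interchange the order of summation and set $S_j \equiv \sum_{\doc_i\in f^{-1}(k)}\mathbf{W}_{ij}$, i.e.\ the (unnormalized) column sum of $\mathbf{W}$ over the documents assigned to cluster $k$. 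Note $S_j\ge 0$ since the importance weights are nonnegative.

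Next I would attach a Lagrange multiplier $\lambda$ to the simplex constraint $\sum_{j}\mathbf{c}_k(j)=1$ and form $\mathcal{L}=\frac{1}{J}\sum_j S_j\log\mathbf{c}_k(j)-\lambda\bigl(\sum_j\mathbf{c}_k(j)-1\bigr)$. Setting $\partial\mathcal{L}/\partial\mathbf{c}_k(j)=0$ gives the stationarity condition $S_j/\bigl(J\,\mathbf{c}_k(j)\bigr)=\lambda$, hence $\mathbf{c}_k(j)\propto S_j$. Substituting this proportionality back into the constraint pins down the multiplier and yields $\mathbf{c}_k^*(j)=S_j/\sum_{j'}S_{j'}=\frac{1}{Z}\sum_{\doc_i\in f^{-1}(k)}\mathbf{W}_{ij}$, which is exactly the claimed form with $Z$ the stated normalization. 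This is the key algebraic step, and it is essentially immediate once the constant term has been stripped.

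To promote this stationary point to a genuine global minimizer of $D_k$ I would invoke concavity: because each $S_j\ge 0$ and $\log$ is concave, the objective $\sum_j S_j\log\mathbf{c}_k(j)$ is concave on the convex simplex, so its unique interior critical point is the global maximum; equivalently, writing $p_j^*=S_j/Z$, the gap $\sum_j S_j\log p_j^*-\sum_j S_j\log\mathbf{c}_k(j)=Z\cdot\mathrm{KL}[p^*\lVert\mathbf{c}_k]\ge 0$ by Gibbs' inequality, confirming optimality directly. There is no real obstacle here; the only points requiring care are the harmless interchange of summation that defines $S_j$ and the boundary behavior, since $\mathbf{c}_k(j)\to 0$ with $S_j>0$ drives the objective to $-\infty$, which rules out minimizers on the boundary of the simplex and justifies restricting attention to the interior critical point.
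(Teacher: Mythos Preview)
Your proposal is correct and follows essentially the same route as the paper: strip off the $\mathbf{c}_k$-independent constant, introduce a single Lagrange multiplier for the simplex constraint, and read off $\mathbf{c}_k^*(j)\propto\sum_{\doc_i\in f^{-1}(k)}\mathbf{W}_{ij}$ from the stationarity condition. Your additional concavity/Gibbs' inequality step certifying global optimality goes slightly beyond the paper's proof, which stops at the first-order conditions.
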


This proposition is similar to that for Bregman hard clustering problem
\citep{banerjee2005clustering}. We provide a proof via the Lagrange multiplier
method in the following.

\begin{proof}
Rearranging Eq. \eqref{eq:cluster-distortion}, we obtain:
\begin{align}
D_k(\mathbf{c}_k) &= \frac{1}{J} \sum_{j=1}^J \sum_{\doc_i\in f^{-1}(k)}
\mathbf{W}_{ij} \log\frac{\mathbf{P}_{ij}}{\mathbf{c}_k(j)} \\
&= - \frac{1}{J} \sum_{j=1}^J \mathbf{w}_k(j) \log \mathbf{c}_k(j) + \text{constant}, \\
\end{align}
where $\mathbf{w}_k(j)=\sum_{\doc_i\in f^{-1}(k)} \mathbf{W}_{ij}$.

Using the Lagrange multiplier method to incorporate the constraint,
the Lagrangian is given by:
\begin{equation}
L(D_k, \lambda) = D_k(\mathbf{c}_k)
+ \lambda \left(\sum_{j=1}^J \mathbf{c}_k(j) - 1\right).
\end{equation}

Taking the derivative of $L$ with respect to $\mathbf{c}_k(j)$ for
every $j$, we get:
\begin{equation}
    \frac{\partial L}{\partial \mathbf{c}_k^*(j)}
    = -\frac{\mathbf{w}_k(j)}{J\mathbf{c}_k^*(j)} + \lambda = 0.
\end{equation}
This implies:
\begin{equation}
    \frac{\mathbf{w}_k(1)}{\mathbf{c}_k^*(1)} = \cdots
    = \frac{\mathbf{w}_k(J)}{\mathbf{c}_k^*(J)}
    = J \lambda.
\end{equation}

Using $\sum_{j=1}^J \mathbf{c}_k^*(j)=1$ again, we derive the optimal
solution:
\begin{equation}
    \mathbf{c}_k^*(j) = \frac{1}{Z} \mathbf{w}_k(j)
    = \frac{1}{Z} \sum_{\doc_i\in f^{-1}(k)}\mathbf{W}_{ij},
\end{equation}
where $Z=\sum_{j=1}^J \mathbf{c}_k^*(j)$ is the normalization term.
This completes the proof of Proposition \ref{sec:centroid}.
\end{proof}

\vspace{1em}
\subsection{The Second-Moment-Minimizing Proposal $\phi(\qvar)$}
\label{sec:optim-phi}

As discussed in Section \ref{sec:distortion}, we use importance
sampling to estimate $\mathbb{E}_{\qvar\sim p(\qvar|\doc)}
\log\frac{p(\qvar|\doc)}{p(\qvar|k)}$ (the KL divergence between
document $\doc$ and cluster centroid $k$) by employing an alternative
distribution $\phi$, known as the {\em proposal} distribution:
\begin{equation}
    \mathbb{E}_{\qvar\sim p(\qvar|\doc)} \log\frac{p(\qvar|\doc)}{p(\qvar|k)}
    =\mathbb{E}_{\qvar\sim\phi}
    \frac{p(\qvar|\doc)}{\phi(\qvar)}\log\frac{p(\qvar|\doc)}{p(\qvar|k)}.
\end{equation}
In importance sampling, large variation in the importance weights
$\frac{p(\qvar|\doc)}{\phi(\qvar)}$ can lead to unreliable estimates
of the KL divergence. To minimize this variance, choosing
$\phi(\qvar)=p(\qvar|\doc)$ would be ideal, as it reduces the variance
to zero. However, since $\phi$ is shared across all documents in
$\docs$, it should be selected to minimize the overall variance across
all $p(\qvar|\doc)$ distributions, rather than for any specific
document.

This intuition can be realized by minimizing the second moment
of the importance weights:
\begin{equation}
\mathbb{E}_{\phi(\qvar)}
\left[\left(\frac{p(\qvar|\dvar)}{\phi(\qvar)}\right)^{2\alpha}\right]
= \mathrm{Var}_{\phi(\qvar)}
\left(\frac{p(\qvar|\dvar)}{\phi(\qvar)}\right)^{\alpha} +
\left\{\mathbb{E}_{\phi(\qvar)}
\left[\left(\frac{p(\qvar|\dvar)}{\phi(\qvar)}\right)^{\alpha}\right]
\right\}^2,
\end{equation}

This idea is formalized in the following proposition.

\begin{proposition}(Second-moment-minimizing proposal $\phi^*$)
    The second moment of the importance weights,
    \begin{equation}
    M(\phi) \equiv \mathbb{E}_\dvar \mathbb{E}_{\phi(\qvar)}
    \left[\left(
        \frac{p(\qvar|\dvar)}{\phi(\qvar)}
    \right)^{2\alpha}\right]
    \end{equation}
    subject to the normalization condition $\sum_{\query\in\qspace}
    \phi(\query)=1$, is minimized by:
    \begin{equation}
      \phi^*(\query) = \frac{1}{Z}\left(\mathbb{E}_\dvar \left[
            p(\query|\dvar)^{2\alpha}
        \right]
        \right)^{1/2\alpha}    ~~~~~~ \forall\query\in\qspace,
        \label{eq:optim-phi-thm}
    \end{equation}
    where $Z=\sum_{\query\in\qspace} \phi^*(\query_j)$ is the
    normalization term.
    \label{thm:phi}
\end{proposition}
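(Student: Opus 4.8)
The plan is to reduce $M(\phi)$ to an explicit sum over $\qspace$ and then optimize it on the probability simplex by Lagrange multipliers, paralleling the centroid argument in Proposition~\ref{thm:centroid}. First I would use the discreteness of $\qspace$ to write $\mathbb{E}_{\phi(\qvar)}[\,\cdot\,]=\sum_{\query\in\qspace}\phi(\query)(\cdot)$ and pull the document expectation through, which is legitimate because $\mathbb{E}_\dvar$ does not involve $\phi$. This gives
\[
M(\phi)=\sum_{\query\in\qspace}\phi(\query)^{1-2\alpha}\,a(\query),
\qquad a(\query)\equiv\mathbb{E}_\dvar\!\left[p(\query|\dvar)^{2\alpha}\right],
\]
so the problem becomes: minimize $\sum_\query a(\query)\,\phi(\query)^{1-2\alpha}$ subject to $\sum_\query\phi(\query)=1$ and $\phi\ge 0$.

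Next I would form the Lagrangian $L=\sum_\query a(\query)\phi(\query)^{1-2\alpha}+\lambda\bigl(\sum_\query\phi(\query)-1\bigr)$, differentiate termwise, and set $\partial L/\partial\phi(\query)=(1-2\alpha)a(\query)\phi(\query)^{-2\alpha}+\lambda=0$. Solving gives $\phi(\query)\propto a(\query)^{1/2\alpha}$, and imposing normalization fixes both $\lambda$ and the constant $Z$, producing exactly Eq.~\eqref{eq:optim-phi-thm}. A reassuring check is the limit $\alpha\to\tfrac12$, where $\phi^*\propto\mathbb{E}_\dvar[p(\query|\dvar)]=p(\query)$ recovers the prior proposal adopted in Section~\ref{sec:proposal}, and the empirical estimator Eq.~\eqref{eq:vmp} is just the sample-average version of $a(\query)$ with the harmless overall normalizer dropped.

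The main obstacle is verifying that this stationary point is genuinely a minimizer rather than merely a critical point, because the curvature of the objective flips with $\alpha$. Each summand $a(\query)\phi(\query)^{1-2\alpha}$ is convex in $\phi(\query)$ precisely when the exponent $1-2\alpha\le 0$, i.e. $\alpha\ge\tfrac12$; in that regime $M$ is convex on the simplex, the Lagrange point is the unique global minimizer, and the minimal value can be confirmed independently by a Hölder inequality tight at $\phi^*$. For $\alpha<\tfrac12$ — the regime actually used in the experiments — the exponent lies in $(0,1)$, each summand is concave, so $M$ is concave and the interior stationary point $\phi^*$ is in fact the global \emph{maximizer}, with the true minimum sitting at a vertex of the simplex. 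Handling this honestly is the crux: I would either restrict the optimality claim to $\alpha\ge\tfrac12$ and treat Eq.~\eqref{eq:optim-phi-thm} for small $\alpha$ as the variance-balancing proposal justified by the $\alpha\to\tfrac12$ limit above, or reinterpret the target functional (e.g.\ the variance of the $\alpha$-power weight rather than the $2\alpha$-th moment) so that $\phi^*$ is the true minimizer.

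Finally, I would note the infinite-$\qspace$ subtlety: the derivation is pointwise, so Eq.~\eqref{eq:optim-phi-thm} holds for every $\query$ with $Z$ the (possibly infinite-support) normalizer. Since the regularized importance weights in Eq.~\eqref{eq:ris} are invariant to a global rescaling of $\phi$, the unknown $Z$ never needs to be evaluated; only the per-text values $a(\query)$ matter, which Eq.~\eqref{eq:vmp} estimates by averaging over $\docs$.
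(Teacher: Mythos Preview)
Your Lagrangian derivation is essentially identical to the paper's own proof: both rewrite $M(\phi)=\sum_{\query}a(\query)\,\phi(\query)^{1-2\alpha}$ with $a(\query)=\mathbb{E}_\dvar[p(\query|\dvar)^{2\alpha}]$, differentiate under the simplex constraint, and read off $\phi^*(\query)\propto a(\query)^{1/2\alpha}$. The paper stops at that stationary-point equation and declares the proposition proved; your remark about the unknown normalizer $Z$ being immaterial also appears verbatim just after the paper's proof.

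Where you go further is the second-order check, and your observation is correct and absent from the paper. For $\alpha>\tfrac12$ the exponent $1-2\alpha$ is negative, each summand is convex in $\phi(\query)$, and the Lagrange point is the genuine minimizer; for $\alpha<\tfrac12$ the exponent lies in $(0,1)$, the objective is concave on the simplex, and the interior stationary point $\phi^*$ is in fact the global \emph{maximizer} of $M$, with the infimum attained on the boundary. The paper never verifies second-order conditions, so its proof establishes only that $\phi^*$ is a critical point; the proposition as literally stated (``is minimized by'') does not hold for the value $\alpha=0.25$ used throughout the experiments. Your proposed resolutions---restrict the optimality claim to $\alpha\ge\tfrac12$, or reinterpret $\phi^*$ for small $\alpha$ as a variance-balancing heuristic motivated by the $\alpha\to\tfrac12$ limit---are more careful than what the paper provides.
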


\begin{proof}
The constraint is incorporated using the Lagrange multipler
method, with the following Lagrangian:
\begin{equation}
L(\phi, \lambda) = M(\phi) + \lambda \left(\sum_{\query\in\qspace} \phi(\query) - 1 \right).
\end{equation}

Take the partial derivative of $L$ with respect to $\phi(\query)$ for
any $\query$. We have:
\begin{align}
\frac{\partial L}{\partial \phi(\query)}
&= \frac{\partial}{\partial \phi(\query)} \left[
    \mathbb{E}_\dvar \left(
    \sum_{\query\in\qspace}^J p(\query|\dvar)^{2\alpha} \phi(\query)^{1-2\alpha} \right) +
    \lambda \left(\sum_{\query\in\qspace} \phi(\query) - 1 \right)
    \right]
    \\
&= (1-2\alpha) \mathbb{E}_\dvar [p(\query|\dvar)^{2\alpha}]
    \phi(\query)^{-2\alpha} + \lambda.
\end{align}

Equating these derivatives to zero:
\begin{equation}
\frac{\phi^*(\query)}{\mathbb{E}_\dvar [p(\query|\dvar)^{2\alpha}]^{1/2\alpha} } = \left(
  \frac{2\alpha-1 }{\lambda}
\right)^{1/2\alpha}
\quad \forall \query\in\qspace.
\end{equation}
which are identical for all $\query\in\qspace$.
Thus,
\begin{equation}
    \phi^*(\query)\propto \left(\mathbb{E}_\dvar
    \left[ p(\query|\dvar)^{2\alpha} \right]
    \right)^{1/2\alpha}
    \quad \forall\query\in\qspace.
\end{equation}

Scaling $\phi^*$ to satisfy the constraint $\sum_{\query\in\qspace}
\phi(\query)=1$ yields the expression in Eq. \eqref{eq:optim-phi-thm},
completing the proof.

\end{proof}

Nevertheless, since $\qspace$ is infinite, $Z$ is not computable. Fortunately,
$Z$ can be factored out from $\hat{d}(\doc,k)$, meaning that setting $Z$ to any
positive value will yield the same cluster assignment and not affect the
clustering procedure. Thus, we set $Z=1$ in the paper, leading to Eq.
\eqref{eq:vmp}.

\paragraph{Localized proposal $\tilde{\phi}$.}
In a hierarchical clustering on $\docs$, the clustering algorithm
\ref{alg:clustering} is recursively applied to subclusters, denoted as
$\tilde{\docs}$, as discussed in Section \ref{sec:hierarchical}. The
second-moment-minimizing proposal is then computed using documents in
$\tilde{\docs}$ instead of $\docs$:
\begin{equation}
    \tilde{\phi}^*(\query) \gets \left(\frac{1}{|\tilde{\docs}|}
    \sum_{\doc\in\tilde{\docs}} p(\query|\doc)^{2\alpha}
    \right)^{1/2\alpha}.
\end{equation}

\vspace{5em}

\section{Additional Experiments}
\subsection{Initialization by $k$-means++}
\label{sec:kmeanspp}

A more sophisticated initialization method, $k$-means++, can be adapted to our
clustering algorithm. In the $k$-means algorithm and our Algorithm
\ref{alg:clustering}, the cluster centroids are initialized by randomly
selecting $K$ documents. The $k$-means++ algorithm \citep{arthur2006kmeanspp}
improves this initialization by selecting documents that are far apart from each
other, thus reducing the risk of converging to a suboptimal solution.

The $k$-means++ method used the following initialization procedure:
\begin{enumerate}
\item[a] Randomly select the first centroid $\mathbf{c}_1$ from
the document set $\docs$ according to the uniform distribution.
\item[b] Selecting the next cluster centroid $c_i$ as $x'\in\docs$
with probability $\dfrac{d(x')^2}{\sum_{\doc\in\docs} d(x)^2}$,
where $d(\doc)$ denote the Euclidean distance between $\doc$ and the nearest
centroid already chosen.
\item[c] Repeat Step b until $K$ centroids are selected.
\end{enumerate}

In our approach, the centroids $c_i$ are represented by the importance weights
of the selected documents, each normalized to sum to 1, as in Eq.
\eqref{eq:optim-centroid}. Moreover, because the estimated KL divergence used in
our algorithm is not necessarily positive, we modify the distance $d(\doc)$ used for
$k$-means++ initialization as follows: at iteration $i$,
\begin{align}
d(\doc) &\leftarrow \min_{k=1,\cdots,i-1} \hat{d}(\doc, k)
\quad \forall\doc\in\docs \\
d(\doc) &\leftarrow d(\doc) - \min_{\doc\in\docs} d(\doc),
\end{align}
where $\hat{d}(\doc, k)$ is the estimated KL divergence between $\doc$ and
cluster $k$ using the RIS estimator in Eq. \eqref{eq:ris}. This modification
offsets the estimated KL divergence, such that $d(x) \geq 0$, and the
probability of selecting a document $\doc$ as the next centroid to be exactly
zero if it has the smallest estimated KL divergence to the nearest centroid.

\begin{table}[htbp]
\centering
\small
\begin{tabularx}{\linewidth}{@{\extracolsep{\fill}} lrrrrrrrrrrrr}
    \toprule
    Initialization & \multicolumn{3}{c}{R2} & \multicolumn{3}{c}{R5} & \multicolumn{3}{c}{AG News} & \multicolumn{3}{c}{Yahoo! Answers} \\
    \midrule
    \multicolumn{13}{l}{\em The selected run with minimal total distortion, averaged over 100 repeated experiments} \\
    Random & 96.1 & 77.8 & 84.9 & 79.1 & 71.5 & 69.1 & 85.9 & 64.2 & 67.2 & 60.7 & 43.8 & 35.5 \\
    $k$-means++ & 96.1 & 77.8 & 84.9 & 79.8 & 72.0 & 69.6 & 85.9 & 64.3 & 67.3 & 60.5 & 43.7 & 35.5 \\
    \midrule
    \multicolumn{13}{l}{\em Single run, averaged over 100 repeated experiments} \\
    Random & 89.0 & 65.9 & 69.4 & 72.7 & 67.4 & 63.5 & 77.0 & 59.9 & 58.5 & 55.8 & 42.4 & 33.4 \\
    $k$-means++ & 89.6 & 66.1 & 69.8 & 72.4 & 67.5 & 63.3 &  77.4 & 60.2 & 59.3 & 55.5 & 42.2 & 33.2 \\
    \bottomrule
\end{tabularx}
\caption{
Results of the $k$-means++ initialization compared with random initialization on
four datasets.
}
\label{tbl:kmeanspp}
\end{table}

We tested the $k$-means++ initialization on the datasets in Table
\ref{tbl:dataset}, comparing it with the default random initialization. The
results are shown in Table \ref{tbl:kmeanspp}. The upper half of the table was
acquired with the same setting as in the main text (Table \ref{tbl:clustering}),
i.e., with a selecting procedure on ten different seeds. The lower half of the
table shows the results without the selecting procedure. All results are
averaged over 100 repeated experiments.

The $k$-means++ initialization did not significantly improve the clustering
performance compared with random initialization, either with or without the
selecting procedure. This could be because the datasets used in our experiments
have relatively smaller numbers of clusters, making the impact of initialization
less significant.

\vspace{1em}
\subsection{Clustering with Misspecified Number of Clusters}
\label{sec:misspec-K}

\begin{table}[h]
\centering
\small
\begin{tabular}{lrrrrrr}
    \toprule
    $K$ & 2 & 4 & 5\textsuperscript{*} & 6 & 10 & 20 \\
    \midrule
    SBERT (All-minilm-l12-v2) & 31.9 & 53.7 & 58.6 & 50.8 & 37.9 & 21.9  \\
    GC (ours) & 53.9 & 64.5 & 69.1 & 59.8 & 38.6 & 22.0 \\
    \bottomrule
\end{tabular}
\caption{Clustering performance on the R5 dataset with misspecified number of
clusters, measured by Adjusted Rand Index (ARI). The ground-truth number of
clusters is 5, as marked with an asterisk.}
\label{tbl:misspec-K}
\end{table}

Table \ref{tbl:misspec-K} shows the clustering performance on the R5 dataset
when the number of clusters $K$ is misspecified. The ground-truth number of
clusters is 5. Our GC (last row) is compared with the best-performing
SBERT-based method on this dataset (first row).

When the number of clusters is misspecified, the clustering performance of the
two methods degraded. However, our method still outperformed the SBERT-based
method in all cases. This suggests that our method is robust to misspecified $K$
values and is effective in real-world applications where the number of clusters
is unknown.

\end{document}